\documentclass[11pt,a4paper]{article}
\usepackage[utf8]{inputenc}
\usepackage[T1]{fontenc}
\usepackage{amsmath, amssymb, amsthm}
\usepackage{hyperref}
\usepackage{booktabs}
\usepackage{algorithm}
\usepackage{algpseudocode}
\usepackage{graphicx}
\usepackage[a4paper,hmargin=2.5cm,vmargin=2.5cm,verbose]{geometry}

\newtheorem{theorem}{Theorem}
\newtheorem{definition}{Definition}
\newtheorem{proposition}{Proposition}

\newtheorem{remark}{Remark}
\newtheorem{assumption}{Assumption}

\newcommand{\G}{\mathcal{G}}
\newcommand{\Gp}{\mathcal{G}'}
\newcommand{\Hb}{\mathcal{H}}
\newcommand{\V}{\mathcal{V}}
\newcommand{\E}{\mathcal{E}}
\newcommand{\Loss}{\mathcal{L}}
\newcommand{\Surg}{\mathcal{S}}

\begin{document}

\title{Exact Network Surgery: Functional Invariance and Gradient Plasticity in Reactive Computational Graphs}
\author{Abdallah Khemais \\ \textit{ISITCOM, University of Sousse}}
\date{July 2026}

\maketitle

\begin{abstract}
Function-preserving network growth techniques such as Net2Net \cite{net2net} and progressive stacking \cite{stacking} expand a model's capacity without destroying its learned function, but existing formulations either tolerate small numerical perturbations or require a full rebuild of the training program, discarding compiled artifacts and complicating optimizer-state management. We formalize \emph{Exact Network Surgery}: the in-place insertion of a residual block into a live computational graph such that (i) the network function is preserved --- bit-exactly under explicitly stated floating-point hypotheses --- and (ii) the inserted parameters remain trainable immediately after insertion. We prove an identity-morphism theorem for gated residual blocks, a structural-locality theorem showing that a reactive invalidation engine recomputes exactly the downstream cone of the insertion point --- leaving every other node's value \emph{and} optimizer state untouched --- and an escape-from-initialization proposition showing that the \emph{Gradient Shadowing} gate $\alpha$, initialized at zero over a randomly initialized residual branch, receives a generically non-zero gradient at insertion time. We identify and analyze a degenerate configuration --- zero-initialized output projections combined with a zero gate --- which constitutes an exact saddle point from which gradient descent cannot escape, and explain why our construction avoids it. Every claim above is validated empirically on the reference implementation in \emph{NeuroDSL} \cite{neurodsl}, a reactive define-and-run graph engine in Julia: grafting is bit-exact on every logit tested (0 mismatches out of 1600, preconditions verified rather than assumed); the gate escapes zero at the first optimizer step and unlocks the branch gradients at the second, exactly as predicted; the degenerate configuration exhibits gradients that are identically zero --- not merely small --- for the entire 600-step run; surgery cost tracks the downstream cone size with $r = 0.9992$ while the graft-plus-invalidation bookkeeping itself is constant ($\sim$0.75~ms) across insertion depths; and training resumes bit-identically across a real process restart. A clearly-flagged preliminary appendix reports first single-seed observations on what happens to a grafted gate over the training that follows insertion, including a learning-rate ladder in which the gate magnitude and the grafted branch's norm move in opposite directions, so that the gate alone reverses the intended ordering of how far a graft has functionally drifted from the identity.
\end{abstract}

\section{Introduction}

Growing a neural network during training --- inserting new layers into a model that has already learned --- is attractive whenever training a large model from scratch is wasteful: curriculum-style capacity scheduling \cite{stacking, bert2bert}, continual learning, and architecture search all benefit from it. The central difficulty is not conceptual but operational: the insertion must not destroy the learned function, and the surrounding training machinery (optimizer state, compiled kernels, cached activations) must survive the mutation.

Eager frameworks such as PyTorch \cite{pytorch} are define-by-run and therefore support topological mutation natively: one can assign a new module to an attribute and the next forward pass will use it. The cost of surgery in such frameworks is not the mutation itself but its \emph{consequences}: (i) under compilation (\texttt{torch.compile}, \texttt{jax.jit} \cite{jax}), any change to the traced program triggers re-tracing and recompilation of the whole graph; (ii) optimizer state must be re-associated with a mutated parameter set by hand, a common source of silent bugs; and (iii) there is no notion of \emph{partial validity} --- the framework cannot distinguish the subgraph whose cached values are still correct from the subgraph invalidated by the mutation.

\emph{NeuroDSL} \cite{neurodsl} takes a different approach: the computational graph is a persistent, reactive DAG. Nodes own their values and their optimizer state; edges carry explicit dependency information; and a mutation triggers an $O(|\text{affected}|)$ invalidation wave rather than a global rebuild. This paper shows that, on top of such an engine, network surgery can be made \emph{exact} --- the mutated network is functionally indistinguishable from the original, under hypotheses we make precise --- while remaining \emph{plastic}: the new parameters start learning at the very first post-surgery optimization step.

\paragraph{Contributions.}
\begin{enumerate}
    \item A formal definition of grafting on computational graphs and of functional exactness (Section~\ref{sec:prelim}).
    \item An identity-morphism theorem for gated residual blocks, together with a qualified bit-exactness proposition that states its IEEE-754 hypotheses explicitly (Section~\ref{sec:exact}).
    \item A structural-locality theorem: reactive invalidation recomputes exactly the downstream cone of the insertion point, proved by double inclusion over an explicit algorithm, so that parameters outside the cone keep their values and their optimizer state (Section~\ref{sec:locality}). The same theorem makes a \emph{structural} undo of a graft cheap and exact at $O(|\V_s^+|)$ log size, regardless of how far training has since moved the grafted function (Section~\ref{sec:undo}).
    \item \emph{Gradient Shadowing}, a gating scheme in the ReZero family \cite{rezero}, with a proof that the gate escapes its zero initialization in one step, and an analysis of the degenerate double-zero configuration that must be avoided (Section~\ref{sec:shadow}).
    \item An empirical validation of every claim above (Section~\ref{sec:experiments}): bit-exactness with verified preconditions (E1), the three-arm plasticity experiment confirming both the escape and the saddle point (E2), cost-versus-depth measurements confirming locality (E3), bit-identical training continuation across a real process restart (F3), and a pre-registered utility study at equal compute. A clearly-flagged preliminary appendix (Appendix~\ref{sec:appendix-gate}) reports single-seed observations on post-insertion gate dynamics: an exact functional undo-distance identity, a controlled early-vs-late timing comparison (E6), and a warm-start counterfactual tracing most of the apparent timing effect to an optimizer-moment artifact (E8).
\end{enumerate}

\section{Related Work}
\label{sec:related}

\paragraph{Function-preserving growth.} Net2Net \cite{net2net} introduced function-preserving transformations (\emph{Net2WiderNet}, \emph{Net2DeeperNet}) based on identity-initialized layers; exactness holds for ReLU networks but is perturbed by normalization layers and requires care with weight symmetry (broken in practice by added noise). Progressive stacking \cite{stacking} and bert2BERT \cite{bert2bert} grow Transformers by duplicating layers, accepting a functional perturbation that is repaired by continued training. LEMON \cite{lemon} restores lossless expansion for LayerNorm-based Transformers. GradMax \cite{gradmax} grows networks by maximizing the gradient norm of the new weights, which is the dual of our concern: we guarantee the new weights have a \emph{usable} gradient path, GradMax optimizes its magnitude. Concurrent work grows a decoder-only Transformer by stacking new layers above a \emph{frozen} pretrained substrate \cite{frozensubstrate} --- the same configuration as the appendix experiment reported here --- but does not claim function preservation across the expansion; exactness at the moment of insertion, and the plasticity that has to survive it, are what the present paper adds to that setting.

\paragraph{Zero-gated residuals.} Initializing a residual branch so that it initially contributes nothing is a well-studied stabilization device: Fixup \cite{fixup} zero-initializes the last layer of each residual branch; ReZero \cite{rezero} multiplies each branch by a learned scalar $\alpha$ initialized at $0$ over \emph{randomly initialized} weights; LayerScale \cite{layerscale} uses a learned diagonal gate initialized near zero. Our Gradient Shadowing gate is a ReZero-style scalar; our contribution is not the gate itself but (i) its role in making \emph{surgery} exact rather than training stable, (ii) the explicit saddle-point analysis of the zero-weights-plus-zero-gate configuration, and (iii) its integration with a reactive invalidation engine that makes the surgery local.

\paragraph{Dynamic graph engines.} PyTorch \cite{pytorch} is define-by-run; JAX \cite{jax} traces pure functions. Neither maintains a persistent reactive graph across steps: validity of cached computation is not a first-class notion. NeuroDSL's design is closer to incremental computation systems, applied to differentiable programs.

\section{Preliminaries}
\label{sec:prelim}

\begin{definition}[Computational graph]
A computational graph is a tuple $\G = (\V, \E, \mathrm{op}, \theta)$ where $(\V, \E)$ is a finite DAG, $\mathrm{op}(v)$ assigns to each non-input node an operator, and $\theta(v)$ its (possibly empty) parameter set. Distinguished subsets $\V_{\mathrm{in}} \subset \V$ (sources) and a node $v_{\mathrm{out}}$ (output) induce, by composition along the topological order, a function $F_{\G} : \mathcal{X} \to \mathcal{Y}$.
\end{definition}

\begin{definition}[Grafting]
Let $e = (u, w) \in \E$ be an edge of $\G$ and $\Hb$ a computational graph with a single input and a single output, realizing a function $F_{\Hb}$. The grafting $\Surg(\G, e, \Hb)$ produces $\Gp$ by removing $e$ and adding edges $(u, \mathrm{in}(\Hb))$ and $(\mathrm{out}(\Hb), w)$, i.e., the value flowing along $e$ now passes through $\Hb$.
\end{definition}

Defining insertion on an \emph{edge} (rather than "at a node") removes any ambiguity about which consumers of $u$ are rerouted: exactly those along $e$.

\begin{definition}[Functional exactness]
$\Surg(\G, e, \Hb)$ is \emph{functionally exact} iff $F_{\Gp}(x) = F_{\G}(x)$ for all $x \in \mathcal{X}$. It is \emph{bit-exact} on a set $X_0 \subseteq \mathcal{X}$ of floating-point inputs iff the equality holds at the level of binary representations for all $x \in X_0$ under a fixed evaluation order.
\end{definition}

Note that functional exactness over the reals does not imply bit-exactness in floating-point arithmetic, nor conversely; Section~\ref{sec:exact} treats the two separately.

\section{Exact Grafting of Residual Blocks}
\label{sec:exact}

We instantiate $\Hb$ as a pre-normalization Transformer block in the Llama family \cite{llama}. Its exact sequential structure --- which we state faithfully, since the theorem depends on it --- is:
\begin{align}
    h &= x + \mathrm{Attn}\big(\mathrm{RMSNorm}(x);\, \theta_{\mathrm{attn}}\big), \label{eq:block1}\\
    H(x) &= h + \mathrm{MLP}\big(\mathrm{RMSNorm}(h);\, \theta_{\mathrm{mlp}}\big), \label{eq:block2}
\end{align}
where $\mathrm{RMSNorm}$ is root-mean-square normalization \cite{rmsnorm}, $\mathrm{Attn}$ ends with an output projection $W_O$, and $\mathrm{MLP}$ ends with a down projection $W_{\mathrm{down}}$.

\begin{assumption}[No output bias]
\label{ass:nobias}
The projections $W_O$ and $W_{\mathrm{down}}$ carry no bias term. (This holds for the Llama architecture \cite{llama}; if biases were present, they would additionally have to be initialized to zero.)
\end{assumption}

\begin{theorem}[Identity morphism]
\label{thm:identity}
Under Assumption~\ref{ass:nobias}, if at insertion time $t_{\mathrm{ins}}$ the parameters satisfy $W_O = 0$ and $W_{\mathrm{down}} = 0$ (all other parameters arbitrary), then $H = \mathrm{Id}$ over the reals, and consequently $\Surg(\G, e, \Hb)$ is functionally exact.
\end{theorem}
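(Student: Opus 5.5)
The plan is to show that each residual branch of $\Hb$ contributes exactly zero and then propagate this through the two sequential residual equations \eqref{eq:block1}--\eqref{eq:block2}. Functional exactness of the surgery then follows because the graph is evaluated by composition along the topological order.

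\textbf{Step 1: a branch ending in a bias-free zero projection is identically zero.} Both $\mathrm{Attn}(\cdot;\theta_{\mathrm{attn}})$ and $\mathrm{MLP}(\cdot;\theta_{\mathrm{mlp}})$ have the form $z \mapsto \phi(z)\,W$, where $W$ is the final projection ($W_O$ or $W_{\mathrm{down}}$) and $\phi$ collects all preceding computation.
\begin{itemize}
\item For attention, $\phi$ is the per-head softmax-weighted values, concatenated, and applied after RMSNorm.
\item For the MLP, $\phi$ is the gated activation $\sigma(zW_{\mathrm{gate}})\odot zW_{\mathrm{up}}$.
\end{itemize}
By Assumption~\ref{ass:nobias} there is no additive bias after $W$. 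Hence, whenever $\phi(z)$ is a well-defined real matrix, $\phi(z)\cdot 0 = 0$.

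The only point needing care is that $\phi(z)$ is finite for every input, since the ``other parameters'' are arbitrary. I will check this piece by piece.
\begin{itemize}
\item RMSNorm is well defined when it carries its standard $\epsilon>0$ in the denominator $\sqrt{\mathrm{mean}(z^2)+\epsilon}$. I will state this convention explicitly, since it is what makes the map total at $z=0$.
\item Softmax is always finite, including under causal masking, because each row has at least one unmasked entry.
\item Matrix products, rotary embeddings, and SiLU are all total real functions.
\end{itemize}
Therefore $\mathrm{Attn}(\mathrm{RMSNorm}(x))=0$ and $\mathrm{MLP}(\mathrm{RMSNorm}(h))=0$ for every argument.

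\textbf{Step 2: compose.} Step 1 applied to \eqref{eq:block1} gives $h=x+0=x$. Step 1 applied to \eqref{eq:block2} then gives $H(x)=h+0=x$. Note that the second branch is evaluated at $\mathrm{RMSNorm}(h)$, but its value is zero whatever $h$ is, so the sequential dependence causes no difficulty. Thus $H=\mathrm{Id}$ on the whole input space.

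\textbf{Step 3: lift to the graph.} Write $\Gp=\Surg(\G,e,\Hb)$ with $e=(u,w)$. Induct along a topological order of $\Gp$ to show that every node common to $\G$ and $\Gp$ carries the same value.
\begin{itemize}
\item Nodes not downstream of $\mathrm{out}(\Hb)$ have the same inputs, operators, and parameters as in $\G$.
\item Node $w$ receives $H(\mathrm{val}(u))=\mathrm{val}(u)$ along the rerouted edge, which is exactly the value it received along $e$ in $\G$. Its other incoming edges are unchanged.
\item Every node after $w$ therefore also matches, by the induction.
\end{itemize}
In particular the values at $v_{\mathrm{out}}$ agree, so $F_{\Gp}=F_{\G}$.

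The only real obstacle is the totality issue in Step 1: ensuring $\phi$ never yields an undefined value ($0/0$ in RMSNorm, or $\infty\cdot 0$). Over the reals this is settled by the $\epsilon$ convention and the finiteness of softmax. The floating-point analogue, where overflow or NaN could survive multiplication by zero, is deferred to the separate bit-exactness proposition.
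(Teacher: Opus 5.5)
Your proof is correct and follows the paper's argument. A bias-free zero final projection makes each branch vanish, so $h = x$ and then $H(x) = x$ despite the sequential coupling; the rerouted edge therefore delivers the same value to $w$, and all downstream nodes agree. Your additional checks (the $\epsilon$ in RMSNorm making the map total on zero rows, finiteness of causally masked softmax, and the explicit topological-order induction over nodes common to $\G$ and $\Gp$) spell out what the paper's proof leaves implicit in the phrase ``regardless of the attention scores''.
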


\begin{proof}
Since $\mathrm{Attn}$ terminates in the linear map $W_O$ with no bias, $W_O = 0$ implies $\mathrm{Attn}(\cdot) \equiv 0$ regardless of the attention scores; by \eqref{eq:block1}, $h = x$. Likewise $W_{\mathrm{down}} = 0$ implies $\mathrm{MLP}(\cdot) \equiv 0$, hence $H(x) = h = x$ by \eqref{eq:block2}. Grafting on $e = (u,w)$ replaces the value $x_e$ flowing along $e$ by $H(x_e) = x_e$; every node of $\Gp$ therefore computes the same real value as in $\G$, and $F_{\Gp} = F_{\G}$.
\end{proof}

Exactness over the reals is not yet bit-exactness: the graft introduces \emph{new floating-point operations} ($x + 0$, and the internal computation of the block). The following proposition states precisely when they are harmless.

\begin{proposition}[Qualified bit-exactness]
\label{prop:bitexact}
Work in IEEE-754 arithmetic with round-to-nearest. Under the hypotheses of Theorem~\ref{thm:identity}, suppose additionally that on input $x$:
\begin{enumerate}
    \item all activations internal to $\Hb$ upstream of $W_O$ and $W_{\mathrm{down}}$ are finite (no $\pm\infty$, no $\mathrm{NaN}$); and
    \item no component of the residual stream equals $-0.0$.
\end{enumerate}
Then the graft is bit-exact: every downstream node of $\Gp$ holds the same binary representation as in $\G$.
\end{proposition}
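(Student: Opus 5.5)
The plan is to follow the value of the residual stream through $\Hb$ one floating-point operation at a time, showing that the two residual additions are the only places the stream is touched and that each one returns its input bit-for-bit. Everything downstream of $\mathrm{out}(\Hb)$ then sees identical bits, so it performs identical operations under the fixed evaluation order.

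The first step is to show that the branch outputs are exactly $+0.0$ in every component. Hypothesis~1 says every activation entering $W_O$ is finite. Each product $a_j \cdot 0$ with finite $a_j$ is a signed zero, namely $\pm 0.0$ according to the sign of $a_j$. It is never $\mathrm{NaN}$; that would require $a_j = \pm\infty$, which hypothesis~1 rules out.

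The accumulation of these products needs care. Under round-to-nearest, a sum of signed zeros is $-0.0$ only if every summand is $-0.0$, and otherwise it is $+0.0$. So I will not claim the branch output is $+0.0$ outright. I will only claim that each output component $z_i \in \{+0.0, -0.0\}$, and that this holds for any accumulation order, including fused multiply-add or blocked GEMM kernels, since every partial sum stays a signed zero.

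The same argument applies to $W_{\mathrm{down}}$, using finiteness of the MLP hidden activations. Assumption~\ref{ass:nobias} guarantees that no bias is added afterwards, so $z$ really is the branch output.

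The second step handles the residual addition $x_i + z_i$ with $z_i = \pm 0.0$. For finite nonzero $x_i$, the exact sum is $x_i$, which is representable, so the rounded result is $x_i$ exactly. For $x_i = +0.0$, IEEE-754 gives $+0.0 + (\pm 0.0) = +0.0$ under round-to-nearest. The only failing case is $x_i = -0.0$ with $z_i = +0.0$, which produces $+0.0$ and flips the sign bit. Hypothesis~2 excludes exactly this case. I will also note the remaining cases:
\begin{itemize}
\item if $x_i$ is $\pm\infty$, then $x_i + (\pm 0.0) = x_i$;
\item if $x_i$ is $\mathrm{NaN}$, the result is a $\mathrm{NaN}$, and I will flag that payload preservation is implementation-defined, or else read hypothesis~1 as also covering the stream $x$ entering the block.
\end{itemize}
Hence $h = x$ bitwise. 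Because hypothesis~1 and hypothesis~2 are stated on the whole residual stream, they also apply to $h$, and repeating the argument gives $H(x) = h = x$ bitwise.

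The last step is an induction along the topological order of $\Gp$. The node $w$ receives the same bits on its rerouted edge as it did on $e$ in $\G$, and the same bits on all its other inputs. Operators are deterministic under the fixed evaluation order, so $w$ and every node downstream of it hold identical representations. Nodes not downstream of $\Hb$ are unchanged trivially.

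The main obstacle is the signed-zero bookkeeping in the first two steps. The hypotheses are stated loosely: hypothesis~2 mentions only $-0.0$, while the matmul can produce $-0.0$ branch outputs. The delicate point is to show that this is harmless precisely because $x_i \neq -0.0$, and to check that no kernel-level reassociation can manufacture a nonzero value or a $\mathrm{NaN}$ from finite inputs times zero.
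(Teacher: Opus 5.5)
Your proposal is correct and follows the paper's own proof. The paper also argues that finite-times-zero makes both branch outputs signed zeros, that the residual additions return $x_i$ bit-for-bit except in the excluded case $-0.0 + (+0.0)$, and that deterministic kernels then propagate identical bits downstream. Your explicit handling of a possibly $-0.0$ branch output, which the paper glosses over by writing $x_i + 0.0$, is a genuine tightening, and your $\pm\infty$/$\mathrm{NaN}$ cases for $x$ are in fact vacuous here: a non-finite component of $x$ would make $\mathrm{RMSNorm}(x)$ non-finite and so already violate hypothesis~1.
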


\begin{proof}
By hypothesis (1), each product inside the zero projections is of the form $0 \times a$ with $a$ finite, which is an exact $\pm 0.0$, and their sum is $\pm 0.0$; hence the branch outputs an exact zero vector. (Hypothesis (1) is necessary: $0 \times \infty = \mathrm{NaN}$, which would propagate.) The only remaining new operations are the residual additions $x_i + 0.0$. In round-to-nearest, $x_i + 0.0 = x_i$ bit-for-bit for every finite $x_i \neq -0.0$; the single exception is $-0.0 + 0.0 = +0.0$, excluded by hypothesis (2). All downstream nodes then receive bit-identical inputs and, computed by the same kernels in the same order, produce bit-identical outputs.
\end{proof}

\begin{remark}
Without hypothesis (2) the graft remains functionally exact (since $-0.0 = +0.0$ as real numbers) but not bit-exact; without hypothesis (1) it is not even functionally exact in floating point, as a $\mathrm{NaN}$ contaminates the residual stream. Empirically, hypothesis (2) is checkable in $O(n)$ before surgery, and hypothesis (1) holds whenever the incoming activations are finite and the randomly initialized internal weights are of moderate scale. The claim of bit-exactness in this paper is always understood under these hypotheses.
\end{remark}

\section{Topological Stability and Locality}
\label{sec:locality}

Surgery must leave the cached state of unaffected nodes intact. We formalize the invalidation procedure as Algorithm~\ref{alg:invalidate} and prove that it invalidates \emph{exactly} the downstream cone; the NeuroDSL routine \texttt{\_invalidate\_downstream!} implements this algorithm (implementation conformance is an engineering claim, verified by the test suite, not part of the proof).

\begin{algorithm}
\caption{\textsc{Invalidate}$(\G, s)$ --- reactive invalidation from a seed node $s$}
\label{alg:invalidate}
\begin{algorithmic}[1]
\State $Q \gets [s]$;\quad $\mathrm{seen} \gets \{s\}$
\While{$Q$ not empty}
    \State $v \gets \mathrm{pop}(Q)$
    \For{each $w$ with $(v, w) \in \E$}
        \If{$w \notin \mathrm{seen}$}
            \State $\mathrm{valid}(w) \gets \mathbf{false}$;\quad $\mathrm{seen} \gets \mathrm{seen} \cup \{w\}$;\quad $\mathrm{push}(Q, w)$
        \EndIf
    \EndFor
\EndWhile
\end{algorithmic}
\end{algorithm}

\begin{definition}[Downstream cone]
For $s \in \V$, let
\[
\V_s^{+} = \{\, v \in \V \mid \text{there is a path of length} \geq 1 \text{ from } s \text{ to } v \,\}.
\]
By convention $s \notin \V_s^{+}$ unless $s$ lies on a cycle, which the DAG property excludes; in our setting $s = \mathrm{out}(\Hb)$ is the freshly created node, which is born unvalued and needs no invalidation.
\end{definition}

\begin{theorem}[Structural locality]
\label{thm:locality}
On a DAG, Algorithm~\ref{alg:invalidate} terminates and the set of nodes it marks invalid is exactly $\V_s^{+}$. Its complexity is $O(|\V_s^{+}| + |\E_s^{+}|)$, where $\E_s^{+}$ is the set of edges internal to the cone.
\end{theorem}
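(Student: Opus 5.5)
The plan is to prove the three parts of Theorem~\ref{thm:locality} separately: termination, the set equality by double inclusion, and the complexity bound. For termination, the key invariant is that a node is pushed onto $Q$ only at the moment it is inserted into $\mathrm{seen}$, and it is never removed from $\mathrm{seen}$. Hence each node is pushed at most once, so there are at most $|\V|$ pops. Each pop scans finitely many out-edges, so the loop halts. Acyclicity is not needed for termination itself, because the $\mathrm{seen}$ set already guards against revisits. The DAG hypothesis matters only in the convention that $s \notin \V_s^{+}$, which is what makes the marked set equal $\V_s^{+}$ exactly rather than $\V_s^{+}\cup\{s\}$.

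For soundness, I will show that every node marked invalid lies in $\V_s^{+}$. I will prove by induction on push order the invariant that every node in $\mathrm{seen}\setminus\{s\}$ is reachable from $s$ by a path of length $\geq 1$. The base case is $\mathrm{seen}=\{s\}$, which gives nothing to check. For the inductive step, suppose $w$ is added while $v$ is being processed, via the edge $(v,w)$. Since $v$ was popped, $v \in \mathrm{seen}$, so either $v=s$ or $v$ is reachable from $s$ by a path of length $\geq 1$. Appending the edge $(v,w)$ then gives a path of length $\geq 1$ from $s$ to $w$. The nodes marked invalid are exactly those added to $\mathrm{seen}$ inside the loop, so they form a subset of $\V_s^{+}$. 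Note that $s$ itself is never marked, since it is in $\mathrm{seen}$ from the start.

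Completeness is the step I expect to need the most care. Here the claim is that every $v \in \V_s^{+}$ gets marked. I will argue by induction on the length $k$ of a shortest path $s=v_0\to v_1\to\cdots\to v_k=v$. I will use two facts.
\begin{enumerate}
\item Every node that enters $\mathrm{seen}$ is eventually popped. This holds because entry into $\mathrm{seen}$ coincides with a push, and the loop only ends when $Q$ is empty.
\item When $v_{k-1}$ is popped, its out-edge $(v_{k-1},v_k)$ is scanned. At that point $v_k$ is either already in $\mathrm{seen}$ or gets added and marked.
\end{enumerate}
The subtle case is $v_k=s$. It cannot occur, because on a DAG there is no path of length $\geq 1$ from $s$ back to $s$. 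So any $v_k$ that is in $\mathrm{seen}$ either equals $s$, which is impossible, or was added and marked earlier. This is exactly where acyclicity is used, and I will state it explicitly.

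For the complexity bound, each node in $\{s\}\cup\V_s^{+}$ is pushed and popped once, and each pop of $v$ scans $\mathrm{outdeg}(v)$ edges. Every out-edge of a node in the cone ends at a node in $\V_s^{+}$, so these are edges internal to $\{s\}\cup\V_s^{+}$. Summing the work gives $O(1+|\V_s^{+}|+|\E_s^{+}|)$. This assumes $O(1)$ set membership and adjacency-list access, and it requires $\E_s^{+}$ to include the out-edges of $s$. I will note that convention, since otherwise the bound is off by $\mathrm{outdeg}(s)$.
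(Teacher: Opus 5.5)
Your proposal is correct and follows the paper's proof step for step: termination via the $\mathrm{seen}$ set, soundness by induction on marking order, completeness by induction along a witnessing path, and complexity by counting pops and scanned edges. Making explicit that acyclicity rules out $v_k=s$ in the completeness step is a genuine sharpening, since the paper's ``already in $\mathrm{seen}$ (hence was marked and pushed earlier)'' silently relies on it. Two small slips: the aside in your termination paragraph has the direction reversed (if $s$ lay on a cycle the marked set would be $\V_s^{+}\setminus\{s\}$, not $\V_s^{+}\cup\{s\}$, because $s$ is never marked), and the out-degree caveat is unnecessary, since with $\E\subseteq\V\times\V$ each out-neighbour of $s$ is a distinct element of $\V_s^{+}$, so $\mathrm{outdeg}(s)\le|\V_s^{+}|$ is already absorbed into the bound.
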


\begin{proof}
\emph{Termination.} Each node enters $\mathrm{seen}$ at most once and is pushed at most once; $\V$ is finite.

\emph{Soundness} (marked $\Rightarrow$ in cone). We show by induction on the order of marking that every node marked invalid lies in $\V_s^{+}$. The first nodes marked are the immediate successors of $s$, which are in $\V_s^{+}$ via paths of length $1$. Inductively, a node $w$ is marked only when popped from the queue along an edge $(v, w)$ with $v$ previously marked or $v = s$; by the induction hypothesis there is a path $s \rightsquigarrow v$ (possibly empty if $v = s$), which extended by $(v,w)$ gives a path of length $\geq 1$ from $s$ to $w$. Hence $w \in \V_s^{+}$.

\emph{Completeness} (in cone $\Rightarrow$ marked). Let $w \in \V_s^{+}$ and let $s = v_0 \to v_1 \to \dots \to v_k = w$, $k \geq 1$, be a witnessing path. By induction on $i$: $v_0 = s \in \mathrm{seen}$ and is processed; if $v_i$ is processed, then when it is popped, its successor $v_{i+1}$ is either already in $\mathrm{seen}$ (hence was marked and pushed earlier) or is marked and pushed now. Either way $v_{i+1}$ is marked and eventually processed. Thus $v_k = w$ is marked.

\emph{Preservation.} A node $v \notin \V_s^{+}$ is never marked by soundness, so $\mathrm{valid}(v)$ is untouched: its cached value, and crucially its optimizer state, survive the surgery.

\emph{Complexity.} Each node in the cone is popped once and each internal edge scanned once.
\end{proof}

\begin{remark}[Consequence for training state]
Theorem~\ref{thm:locality} is what makes surgery cheap in a reactive engine: parameters outside the cone keep their values \emph{and} their Adam moments untouched, because state is owned by persistent nodes rather than by an external optimizer object indexed by parameter position. New parameters in $\Hb$ start with fresh (zero) moments, which is the standard and correct choice: their gradient history is genuinely empty. The interaction with learning-rate schedules is orthogonal and discussed in Section~\ref{sec:experiments}.
\end{remark}

Theorem~\ref{thm:locality} is the single-mutation statement this paper needs. The cost of entire \emph{workloads} of such mutations --- exhaustive site-by-site sweeps, sequences of persistent grafts, and the backward-pass mirror of the theorem --- is a separate accounting question with its own asymptotics, developed in a companion cost-accounting study on the same engine.

\section{Gradient Shadowing: Exactness with Plasticity}
\label{sec:shadow}

Theorem~\ref{thm:identity} buys exactness at a price: with $W_O = W_{\mathrm{down}} = 0$, all parameters \emph{upstream} of the zero projections receive a zero gradient (the chain rule factors through the zero matrices), so the block trains slowly at first --- only the zero projections themselves have non-zero gradients. Worse, a natural-looking "belt and braces" combination is fatal:

\begin{proposition}[Degenerate double-zero configuration]
\label{prop:saddle}
Consider the gated block $F_{\Hb}(x; \alpha) = x + \alpha R(x; \theta)$ with \emph{both} $\alpha = 0$ and the output projections of $R$ zero-initialized (so $R(x) \equiv 0$). Then at $t_{\mathrm{ins}}$:
\[
\frac{\partial \Loss}{\partial \alpha} = \Big\langle \frac{\partial \Loss}{\partial F_{\Hb}},\, R(x) \Big\rangle = 0
\qquad \text{and} \qquad
\nabla_{\theta} \Loss = \alpha \cdot \frac{\partial \Loss}{\partial F_{\Hb}} \cdot \frac{\partial R}{\partial \theta} = 0 .
\]
Moreover this configuration is a fixed point of gradient descent: both gradients remain identically zero at every subsequent step, so the block never trains.
\end{proposition}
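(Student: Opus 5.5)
The plan is a direct chain-rule computation at $t_{\mathrm{ins}}$, followed by an induction over optimizer steps to establish the fixed-point claim. Nothing from Section~\ref{sec:locality} is needed. The only earlier ingredient is the argument of Theorem~\ref{thm:identity}: a bias-free output projection equal to zero annihilates its input whatever the upstream activations are (Assumption~\ref{ass:nobias}). Hence $R(x;\theta) \equiv 0$ for \emph{every} $x$, not only for the inputs seen at insertion.

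\textbf{Step 1 (gradient identities).} I differentiate $F_{\Hb}(x;\alpha) = x + \alpha R(x;\theta)$ directly. This gives $\partial F_{\Hb}/\partial \alpha = R(x;\theta)$ and $\partial F_{\Hb}/\partial \theta = \alpha\, \partial R/\partial \theta$. Composing with $\partial \Loss/\partial F_{\Hb}$ yields the two displayed formulas. The first vanishes because $R(x) = 0$. The second vanishes because $\alpha = 0$ multiplies every entry.

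This includes the entries of $\theta$ belonging to $W_O$ and $W_{\mathrm{down}}$ themselves. Their gradients would be non-zero in the ungated setting of Theorem~\ref{thm:identity}, but here they are also scaled by $\alpha$. I will note this explicitly, since it is exactly what distinguishes the double-zero case from the single-zero case discussed at the start of this section. Summed over a batch, each per-example term is zero, so the batch gradients are zero too.

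\textbf{Step 2 (invariance under steps).} I induct on the step index $k$. The hypothesis is that at step $k$ we have $\alpha_k = 0$, the output projections of $R$ are zero, and all optimizer state attached to the block's parameters is zero. By Step 1 applied at the current input, the block's gradients at step $k$ are zero; this holds no matter how the rest of the network has moved, since $R \equiv 0$ holds for any $x$. The update of each block parameter is then zero:
\begin{itemize}
\item For plain SGD this is immediate.
\item For Adam, the moments stay at $m = v = 0$, so the step $m/(\sqrt{v}+\epsilon)$ is $0$.
\item Decoupled weight decay maps $0$ to $0$ on the zero coordinates, and $\alpha$ itself stays at $0$.
\end{itemize}
Hence the hypothesis holds at step $k+1$. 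Upstream parameters of $R$ also stay fixed, though that is not even needed for the conclusion.

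\textbf{Main obstacle.} The computation is trivial; the step I expect to need care is stating the fixed-point claim with the right scope. It is a fixed point of the \emph{block's} coordinates, not a stationary point of the whole network: the remaining parameters keep training, and that is why the induction must rely on "$R \equiv 0$ for all inputs" rather than on a fixed $x$. I would also state the optimizer class covered, namely any first-order update that maps a zero gradient and zero state to a zero update. Noise-injecting or non-zero-initialized-momentum schemes break the argument, which I would flag as the exclusion.
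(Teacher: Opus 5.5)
This matches the paper's proof: $R \equiv 0$ annihilates the $\alpha$-gradient and $\alpha = 0$ annihilates every $\theta$-gradient, including those of the zero output projections, followed by induction over optimizer steps with all moment estimates remaining zero. Your extension to decoupled weight decay is correct for the coordinates that matter, $\alpha$ and the zero projections, and goes slightly beyond the paper's ``without weight decay'' caveat; however, your aside that the upstream parameters of $R$ stay fixed then fails, since decay shrinks them --- harmlessly, because both gradients vanish whatever the upstream values are.
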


\begin{proof}
At $t_{\mathrm{ins}}$, $R(x) = 0$ annihilates $\partial \Loss / \partial \alpha$ and $\alpha = 0$ annihilates $\nabla_\theta \Loss$. Gradient descent therefore leaves $(\alpha, \theta_{\mathrm{out}})$ unchanged, so the same two annihilations hold at the next step; by induction, forever. (Momentum-based optimizers without weight decay share the fixed point, since all moment estimates remain zero.)
\end{proof}

The two exactness mechanisms --- zero projections and zero gate --- are thus \emph{mutually exclusive}: one must choose exactly one. Gradient Shadowing chooses the gate, following ReZero \cite{rezero}:

\begin{definition}[Gradient Shadowing]
The grafted block is $F_{\Hb}(x; \alpha) = x + \alpha R(x; \theta)$, where $R$ is a residual branch (any composition of smooth primitives ending in linear projections) with $\theta$ drawn from a standard random initialization (so $R(\cdot;\theta) \not\equiv 0$ almost surely), and $\alpha \in \mathbb{R}$ is a learned scalar with $\alpha(t_{\mathrm{ins}}) = 0$.
\end{definition}

\begin{remark}[Parallel branch in the reference implementation]
\label{rem:parallel}
Propositions~\ref{prop:saddle} and \ref{prop:escape} are agnostic to $R$'s internal structure. The reference implementation (\texttt{graft\_shadow\_block!}) instantiates $R(x) = \mathrm{Attn}(N_1(x)) + \mathrm{MLP}(N_2(x))$ as a \emph{parallel} block (GPT-J style), not the sequential chain of \eqref{eq:block1}--\eqref{eq:block2}, for an experimental-design reason: in the sequential form, the MLP reads the attention output, so zeroing $W_O$ (needed for the comparison arms of Section~\ref{sec:experiments}) starves the MLP branch of both signal and gradient --- a coupling discovered empirically, where $\nabla_{W_{\mathrm{down}}}$ vanished exactly despite $W_{\mathrm{down}}$ never being zero-initialized. The parallel form lets each branch be disabled independently, so the three arms differ only in $(\alpha_0, W_{\mathrm{out}})$ and in nothing else. The sequential form of Theorem~\ref{thm:identity} remains the one used by the ungated surgery primitive (\texttt{insert\_block!}).
\end{remark}

Exactness at $t_{\mathrm{ins}}$ is immediate: $\alpha = 0$ gives $F_{\Hb} = \mathrm{Id}$, and Proposition~\ref{prop:bitexact} applies verbatim with "zero branch output" now produced by the multiplication $\alpha \cdot R(x) = 0 \times R(x)$, requiring the same finiteness hypothesis on $R(x)$. Plasticity is the content of the next proposition.

\begin{proposition}[Escape from initialization]
\label{prop:escape}
Let $g = \partial \Loss / \partial F_{\Hb}$ be the incoming gradient at the graft site on the first post-surgery batch. Then at $t_{\mathrm{ins}}$:
\[
\frac{\partial \Loss}{\partial \alpha} = \langle g,\, R(x;\theta) \rangle,
\]
which is non-zero for almost every random draw of $\theta$ (with respect to any initialization distribution absolutely continuous w.r.t.\ Lebesgue measure), provided $g \neq 0$. After one SGD step with learning rate $\eta$, $\alpha_1 = -\eta \langle g, R(x;\theta)\rangle \neq 0$, and from the second step onward $\nabla_\theta \Loss = \alpha_1 \, g \, \partial R / \partial \theta$ is generically non-zero: the entire branch trains.
\end{proposition}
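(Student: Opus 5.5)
The plan has three steps: compute the gradients at $t_{\mathrm{ins}}$ by the chain rule, prove the genericity claim with a zero-set argument for real-analytic functions, and then track one SGD step.

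\textbf{Step 1: gradients at insertion.} Differentiate $F_{\Hb}(x;\alpha) = x + \alpha R(x;\theta)$ with respect to $\alpha$. This gives $\partial F_{\Hb}/\partial\alpha = R(x;\theta)$, and the chain rule yields $\partial\Loss/\partial\alpha = \langle g, R(x;\theta)\rangle$. When the loss is summed over a batch, $\langle\cdot,\cdot\rangle$ is the inner product over all batch positions and feature dimensions. The same computation gives $\nabla_\theta \Loss = \alpha\, g\, \partial R/\partial\theta$, which vanishes at $\alpha=0$. Because $\alpha=0$ makes $F_{\Hb}=\mathrm{Id}$, the forward pass and everything upstream and downstream of the graft coincide with those of $\G$. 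Hence $g$ equals the gradient the original network would have seen along $e$, and it does not depend on $\theta$. This independence is what the genericity argument needs.

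\textbf{Step 2: genericity, which is the main obstacle.} Fix $x$ and the nonzero vector $g$, and set $\phi(\theta)=\langle g, R(x;\theta)\rangle$.
\begin{itemize}
\item $R$ is built from smooth primitives: linear maps, RMSNorm away from a zero input, softmax, and smooth activations such as SiLU or GELU. I will take these to be real-analytic in $\theta$, adding that as an explicit hypothesis if needed. Then $\phi$ is real-analytic on a connected open set.
\item A real-analytic function that is not identically zero has a Lebesgue-null zero set.
\item To show $\phi\not\equiv 0$, use the fact that $R$ ends in a linear projection $W_{\mathrm{out}}$, so $R = W_{\mathrm{out}}\, z(x;\theta')$. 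Then $\phi = \langle g, W_{\mathrm{out}} z\rangle = \mathrm{tr}(W_{\mathrm{out}}^\top\, g z^\top)$ is linear in $W_{\mathrm{out}}$. It is nonzero whenever $g z^\top\neq 0$, that is, whenever $z\neq 0$.
\item It remains to show that $z(x;\theta')\neq 0$ for some $\theta'$. For the MLP branch, choose weights that make the pre-activation hit a point where the activation is nonzero. For the parallel form of Remark~\ref{rem:parallel}, either branch suffices.
\end{itemize}
Once $\phi\not\equiv 0$, absolute continuity of the initialization law transfers the null zero set to probability zero. The delicate point is that RMSNorm is analytic only where its input is nonzero. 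I will handle this by requiring that the incoming activations $x$ have nonzero RMS, or by the $\epsilon$-regularized RMSNorm, which is analytic everywhere.

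\textbf{Step 3: the SGD steps.}
\begin{itemize}
\item At step 1, $\alpha_1 = 0-\eta\,\phi(\theta)\neq 0$ almost surely. Since $\nabla_\theta\Loss=0$ at this step, $\theta$ is unchanged by plain SGD. With Adam the first update is $-\eta\,\mathrm{sign}$-like; the gradient is still nonzero, so this does not affect the argument.
\item At step 2, $\nabla_\theta\Loss=\alpha_1\, g_2\, \partial R/\partial\theta(x_2;\theta)$, where $g_2$ and $x_2$ are the gradient and input on the second batch. Nonvanishing reduces to $g_2^\top \partial R/\partial\theta\neq 0$. The $W_{\mathrm{out}}$ block of this Jacobian already gives $g_2 z^\top\neq 0$ under the same conditions as Step 2.
\item The word ``generically'' therefore inherits the almost-sure qualifier, together with the assumption that $g_2\neq 0$. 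Here $g_2$ depends on $\alpha_1$ and is evaluated at a perturbed network. I will state its nonvanishing as an assumption, parallel to $g\neq 0$, rather than prove it.
\end{itemize}
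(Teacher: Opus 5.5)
Your proposal is correct and follows the paper's proof: the chain rule for $\partial\Loss/\partial\alpha$, then real-analyticity of $\theta\mapsto\langle g,R(x;\theta)\rangle$ so that its zero set is Lebesgue-null, then non-triviality from the final linear projection. Your explicit choice of $W_{\mathrm{out}}$ with $\mathrm{tr}(W_{\mathrm{out}}^\top g z^\top)\neq 0$ is a sharper form of the paper's ``spanning family of rays,'' and your remark that $\alpha=0$ makes $g$ independent of $\theta$ is what licenses the paper's ``fix $g$, quantify over $\theta$.'' One caution: in the batched reading you introduce in Step~1, $g z^\top=\sum_i g_i z_i^\top$ can cancel across positions even when $g\neq 0$ and $z\neq 0$, so ``that is, whenever $z\neq 0$'' holds only for a single position. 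That is a non-degeneracy condition the paper's spanning claim also leaves implicit.
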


\begin{proof}
The expression for $\partial \Loss / \partial \alpha$ is the chain rule applied to $F_{\Hb} = x + \alpha R$. Fix the incoming gradient $g \neq 0$ delivered by the first post-surgery batch; the almost-sure quantification is over $\theta$ at this fixed $g$, not over $g$. The map $\theta \mapsto \langle g, R(x;\theta) \rangle$ is real-analytic in $\theta$ (composition of matrix products, softmax, RMSNorm and smooth activations). It is not identically zero in $\theta$: under standard architectures the achievable set $\{R(x;\theta) : \theta\}$ spans the output space (e.g., scaling the final output projection scales $R$ linearly, so the achievable set contains a spanning family of rays), and the given non-zero $g$ cannot be orthogonal to a spanning set --- some $\theta^\star$ therefore gives $\langle g, R(x;\theta^\star)\rangle \neq 0$. A non-trivial real-analytic function vanishes on a Lebesgue-null set; hence, for this fixed $g$, the gradient is non-zero for almost every draw of $\theta$. The one-step update and the subsequent non-vanishing of $\nabla_\theta \Loss$ follow by substitution.
\end{proof}

\begin{remark}[Alternative: Net2Net-style plasticity]
\label{rem:net2net-alt}
If a scalar gate is undesirable, the symmetric choice is also sound: keep $W_O = W_{\mathrm{down}} = 0$ with \emph{no} gate ($\alpha$ fixed at $1$). Then $\partial \Loss / \partial W_{\mathrm{down}}$ is proportional to the (generically non-zero) pre-projection activations, so the zero projections move at the first step, unlocking the upstream gradients at the second. This is the Fixup/Net2DeeperNet regime \cite{fixup, net2net}; it trades the extra parameter $\alpha$ for a one-step-slower "warm-up" of the upstream weights. What is \emph{not} sound is combining the two (Proposition~\ref{prop:saddle}).
\end{remark}

\begin{remark}[Role of the reactive engine]
Gradient Shadowing requires the graph to hold, at the same time, a forward function that is exactly the identity and a backward rule set that sees through it. In NeuroDSL this is natural: forward dispatch and gradient-rule registration are decoupled, and the surgery installs both atomically inside one invalidation epoch, so no intermediate (inconsistent) graph state is ever observable by a training step.
\end{remark}

\subsection{Undoing a Graft: the Structural Notion}
\label{sec:undo}

A mutable graph raises a question a static one cannot even pose: what does it mean to \emph{undo} a graft, and what does it cost?

\begin{remark}[Structural undo is cheap and exact]
\label{rem:structural-undo}
Once training has moved the gate away from zero, the \emph{function} the network computes has drifted from the identity map that removing the block would restore; quantifying that functional drift is deferred to the preliminary appendix (Proposition~\ref{prop:undo-distance}). The \emph{engine state} needed to reverse the mutation's bookkeeping is a separate, and cheap, question: a reactive engine need only log the pre-graft values of $\V_s^+$ (Theorem~\ref{thm:locality}) to restore the graph to its exact pre-mutation state --- the same mechanism already used to restore a corrupted baseline between patches in the companion interpretability line of work. This \emph{structural} undo costs $O(|\V_s^+|)$ in log size and is always available, cheap, and exact, even when the functional drift is already strictly positive and the \emph{functional} undo is not free. The two notions are genuinely different --- one concerns what has been learned, the other what the engine remembers --- and only a graph that is both mutable and reactive makes both of them definable at all.
\end{remark}

\section{Experimental Results}
\label{sec:experiments}

Every theorem and proposition in this paper is validated by a dedicated experiment on the reference implementation (NeuroDSL, Julia, Float32, CPU). All protocols fix a master seed \emph{before} any run and apply no post-hoc filtering; per-arm and per-depth seeds are derived deterministically from it. One protocol parameter was recalibrated once: the initial budget (vocabulary 256, 200 steps) produced too weak a learning signal to discriminate the arms of E2; it was changed to vocabulary 50 and 600 steps in a single decision, before analysis, and not revisited afterwards.

\subsection{E1: Bit-Exactness (Theorem~\ref{thm:identity}, Proposition~\ref{prop:bitexact})}

A 4-block Llama-style model ($d=128$, 4 heads) is evaluated on a held-out batch; a Gradient-Shadowing block is grafted between blocks 2 and 3; the logits are recomputed with no parameter update and compared at the binary level (\texttt{reinterpret(UInt32, $\cdot$)}). Crucially, the preconditions of Proposition~\ref{prop:bitexact} are \emph{checked}, not assumed: a scan of the residual stream at the graft site found $0/4096$ components equal to $-0.0$, and $0/4096$ non-finite values in $R(x)$. Under these verified preconditions, all $1600$ logits are bit-identical ($0$ mismatches, $\max|\Delta| = 0.0$) --- strict binary equality, a stronger statement than the approximate function preservation reported by Net2Net or bert2BERT.

\subsection{E2: Plasticity and the Saddle Point (Propositions~\ref{prop:saddle}, \ref{prop:escape})}

Three arms share the same initial model, data stream, and random draw of $\theta$; they differ \emph{only} in $(\alpha_0, \text{zero-init of } W_{\mathrm{out}})$, made possible by the parallel branch of Remark~\ref{rem:parallel}. Training runs 600 SGD steps; Table~\ref{tab:e2} reports the grafted block's gradient signals.

\begin{table}[htbp]
\centering
\caption{E2 --- three-arm plasticity experiment (600 steps, shared master seed). "First step" columns give the first optimizer step at which the quantity becomes non-zero; "never" means it remained \emph{exactly} zero ($= 0$, no tolerance) for the entire run.}
\label{tab:e2}
\small
\setlength{\tabcolsep}{4pt}
\begin{tabular}{l c c c c c c}
\toprule
 & & & $\partial\Loss/\partial\alpha$ & \multicolumn{2}{c}{\textbf{First step}} & \\
\cmidrule(lr){5-6}
\textbf{Arm} & $\alpha_0$ & $W_{\mathrm{out}}$ & \textbf{at} $t{=}1$ & $|\alpha| > 0$ & $\|\nabla_\theta\| > 0$ & $\max\|\nabla_\theta\|$ \\
\midrule
(a) Grad.\ Shadowing & $0$ & random & $1.64 \times 10^{-2}$ & 1 & 2 & $3.28 \times 10^{-2}$ \\
(b) Net2Net-style    & $1$ & zero   & $0$ ($R \equiv 0$)     & --- & 1 & $1.65 \times 10^{0}$ \\
(c) Degenerate       & $0$ & zero   & $0$                    & never & never & $0$ (exact) \\
\bottomrule
\end{tabular}
\end{table}

Three predictions are confirmed at the resolution of a single optimizer step. \emph{Arm (a)}: $\partial\Loss/\partial\alpha = 1.64\times 10^{-2} \neq 0$ at the very first backward (Proposition~\ref{prop:escape}); $\alpha$ moves at step 1; $\nabla_\theta$ becomes non-zero at step 2, once $\alpha \neq 0$ reopens the chain rule --- the exact two-step escape the proposition derives. \emph{Arm (b)} is the mirror image predicted by the Net2Net remark: $\partial\Loss/\partial\alpha = 0$ at step 1 (since $R(x) \equiv 0$) while $\nabla_\theta \neq 0$ immediately (the zero projections themselves receive gradient), unlocking the upstream weights one step later. \emph{Arm (c)}: over all 600 steps, $\partial\Loss/\partial\alpha$, $\|\nabla_\theta\|$, and $|\alpha|$ remain \emph{identically} zero --- an empirical confirmation of an impossibility result (Proposition~\ref{prop:saddle}), at exact-zero resolution rather than within a tolerance. The host network's own loss still decreases in arm (c) (its pre-existing parameters keep training); the frozen quantity is the grafted block, not the model --- precisely the failure mode Proposition~\ref{prop:saddle} warns is silent.

Figure~\ref{fig:e2} plots the full 600-step trajectories behind Table~\ref{tab:e2}. The loss panel makes the silence of the failure mode visible: the three arms are nearly indistinguishable in loss, yet the gate and gradient panels show arm (c) exactly flat at zero throughout, while arms (a) and (b) each escape at the step the corresponding proposition predicts.

\begin{figure}[htbp]
\centering
\includegraphics[width=\linewidth]{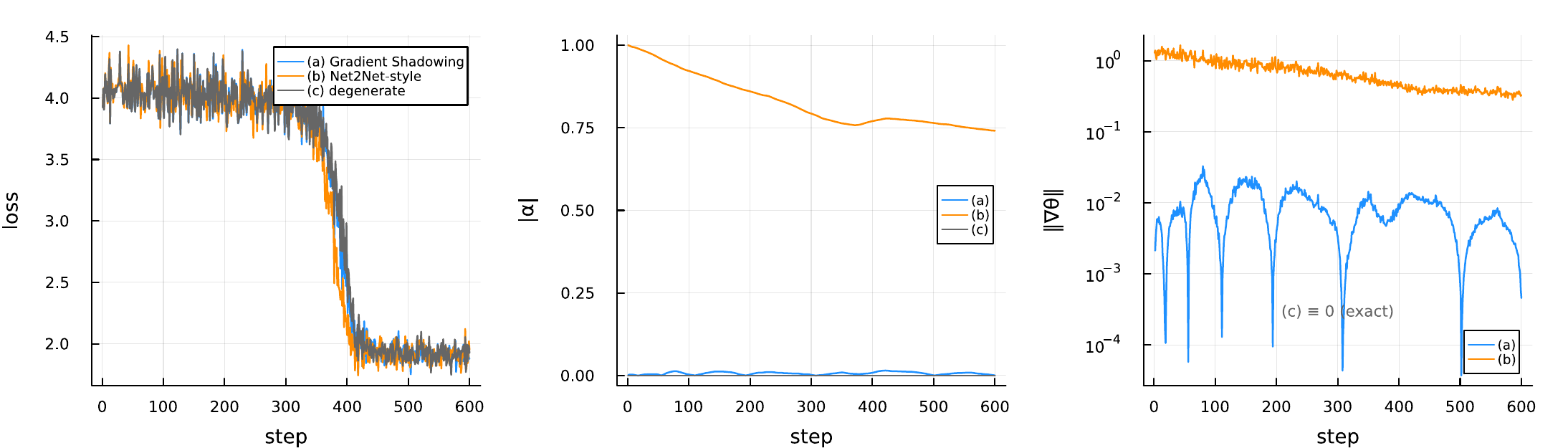}
\caption{E2 dynamics for the three arms over 600 steps: loss (left), gate magnitude $|\alpha|$ (center), and branch gradient norm $\|\nabla_\theta\|$ on a log scale (right). Arm (c) is exactly zero throughout in both the gate and gradient panels and is annotated rather than plotted, since an exact zero has no representation on a logarithmic axis.}
\label{fig:e2}
\end{figure}

An honest magnitude caveat: in arm (a), $|\alpha|$ remains small throughout the run ($\sim 10^{-3}$, final $2.7\times 10^{-4}$), so while the escape is mathematically confirmed, the block's \emph{contribution} to the function stays modest at this scale and duration --- the concern anticipated in Section~\ref{sec:limitations} about the unbounded-below magnitude of $\partial\Loss/\partial\alpha$, now observed. Since E2 uses one run per arm (by design: the arms are a controlled comparison, not a distribution estimate), we make no claim about the ordering of final losses across arms.

\subsection{E3: Surgery Cost vs.\ Insertion Depth (Theorem~\ref{thm:locality})}

On an 8-block model (412 graph nodes), a graft is performed after block $k$ for $k = 1, \dots, 7$; each sample uses a fresh, fully-evaluated model, and the cost is split into \emph{graft} (block construction + rewiring + invalidation, no evaluation) and \emph{recompute} (the first demand-driven forward, which re-executes exactly the invalidated cone). Timings are trimmed medians over 15 repetitions; depths are measured in shuffled order within each repetition --- a fixed order was found to inflate the first-measured depth's graft time by $\sim 2.6\times$ (a position artifact, removed by the shuffle, not a property of the engine).

\begin{table}[htbp]
\centering
\caption{E3 --- surgery cost vs.\ insertion depth $k$ (8-block model, 412 nodes, trimmed median over 15 shuffled repetitions).}
\label{tab:e3}
\begin{tabular}{c c c c c}
\toprule
$k$ & \textbf{Cone} $|\V_k^{+}|$ & \textbf{Graft (ms)} & \textbf{Recompute (ms)} & \textbf{Total (ms)} \\
\midrule
1 & 289 (70.1\%) & 0.745 & 14.503 & 15.248 \\
2 & 248 (60.2\%) & 0.745 & 12.626 & 13.371 \\
3 & 207 (50.2\%) & 0.754 & 11.072 & 11.826 \\
4 & 166 (40.3\%) & 0.745 & 9.067  & 9.812  \\
5 & 125 (30.3\%) & 0.741 & 7.660  & 8.401  \\
6 & 84 \ (20.4\%) & 0.750 & 5.590  & 6.340  \\
7 & 43 \ (10.4\%) & 0.751 & 4.178  & 4.929  \\
\bottomrule
\end{tabular}
\end{table}

Three observations. First, recomputation cost tracks cone size with $r = 0.9992$ and decreases strictly monotonically --- the direct validation of Theorem~\ref{thm:locality}: cost is governed by $|\V_k^{+}|$, not by the total graph size. Second, the graft-plus-invalidation bookkeeping is \emph{constant} ($0.741$--$0.754$~ms) across all depths, independent of cone size --- the signature of an $O(1)$-per-edge consumers index rather than a per-node rule scan; the surgery itself costs less than one fifth of even the smallest recomputation. Third, a linear fit gives $\text{recompute} \approx 0.042\,\text{ms/node} \times |\V_k^{+}| + 2.24\,\text{ms}$: the non-zero intercept is the $O(|\V|)$ walk of the cached topological order inside the demand-driven evaluator, paid even when only 43 nodes need recomputation. This residual interpreter overhead is exactly what a graph-compilation layer (flat precomputed execution order with $O(1)$ dirty marking) is designed to eliminate, and we leave its removal to that companion line of work.

Figure~\ref{fig:e3} plots recompute and graft cost against cone size directly: the linear trend and the flat bookkeeping line are both visible at a glance, and the fitted intercept is the $2.24$~ms evaluator overhead discussed above.

\begin{figure}[htbp]
\centering
\includegraphics[width=.75\linewidth]{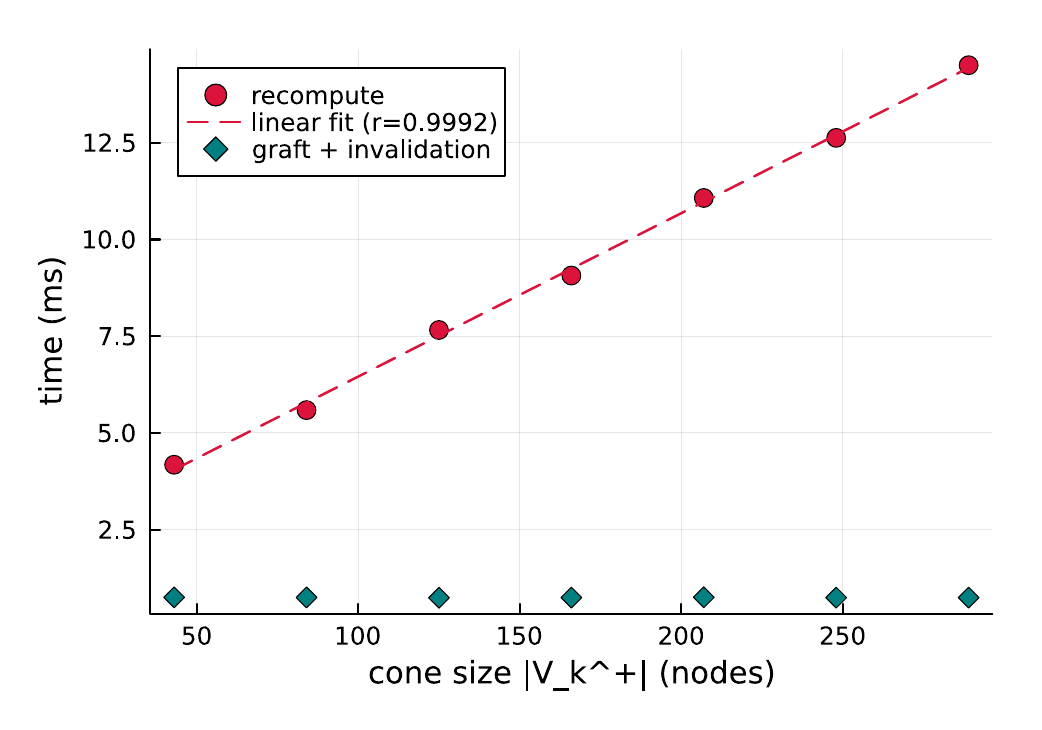}
\caption{Surgery cost vs.\ downstream cone size $|\V_k^{+}|$ on the 8-block model of Table~\ref{tab:e3} ($k = 1,\dots,7$). Recompute time (circles) grows linearly with cone size ($r=0.9992$); graft-plus-invalidation bookkeeping (diamonds) is flat at $\sim$0.75~ms regardless of cone size.}
\label{fig:e3}
\end{figure}

\subsection{F3: Continuity Across a Real Process Restart}

Surgery is only useful if the mutated model, including its optimizer state, survives the full lifecycle of real training --- interruption included. Three independent Julia \emph{processes} are compared: a control run of 500 steps with a graft at step 300; a run A executing the first 400 steps (graft included) then serializing graph and AdamW state and terminating; and a fresh process B that reloads everything from disk (never re-invoking the model constructor) and runs the remaining 100 steps. The concatenated A{+}B loss sequence is \emph{bit-identical} to the control on all 500 steps. Falsifiability check: deliberately discarding the loaded AdamW moments in B makes the run diverge from the control at exactly step 402 --- the first step at which corrupted optimizer state can affect a weight update --- confirming the comparison is sensitive to precisely the state whose preservation is claimed. Figure~\ref{fig:f3} shows all three loss curves: the control and the correctly-resumed run overlap exactly, while the broken-resume run visibly departs right after the restart marker.

\begin{figure}[htbp]
\centering
\includegraphics[width=.75\linewidth]{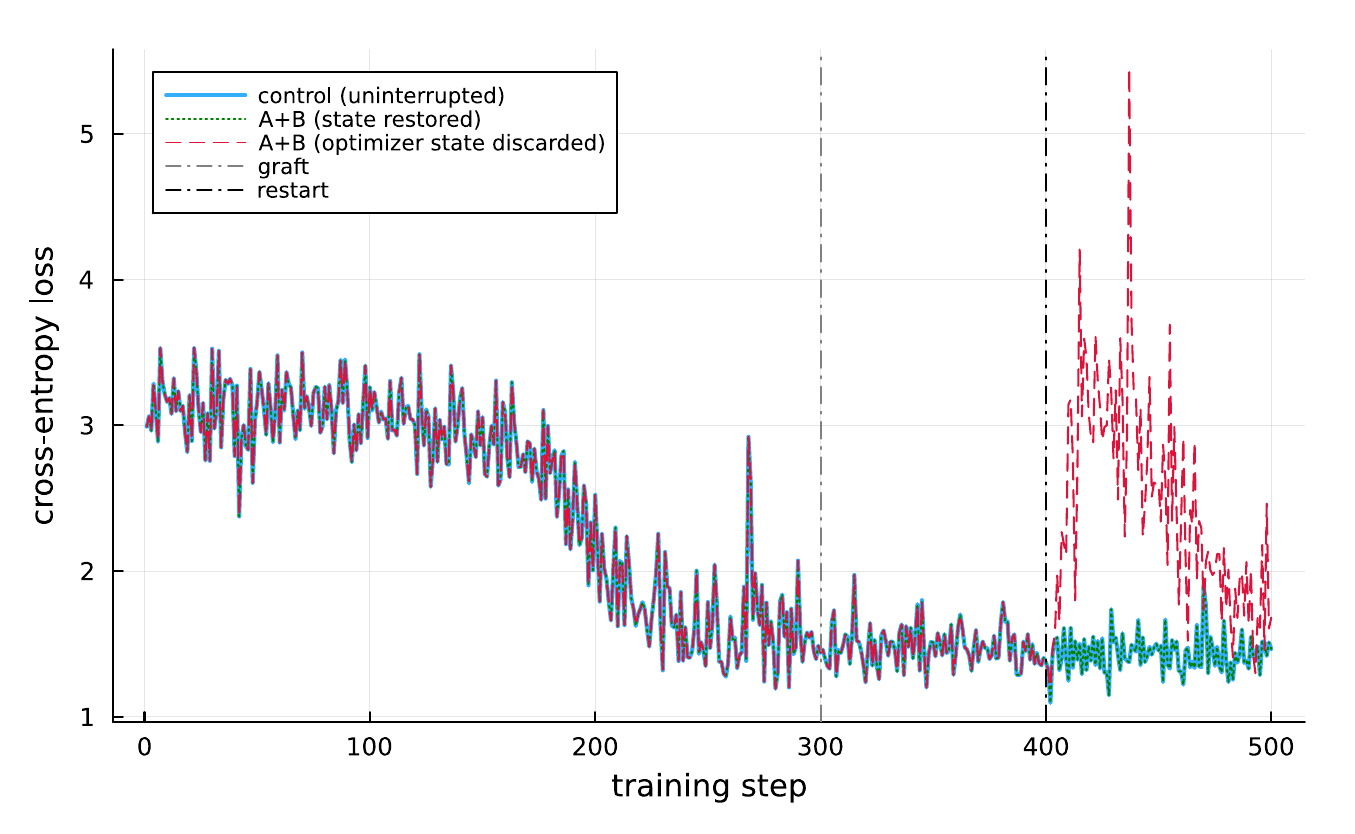}
\caption{Training loss through a graft (step 300) and a real process restart (step 400). The control run and the correctly-resumed A+B run are bit-identical for all 500 steps; a run that discards the loaded AdamW moments diverges from both at exactly step 402.}
\label{fig:f3}
\end{figure}

\subsection{Beyond Correctness: Utility at Equal Compute}

E1--E3 and F3 establish that surgery is exact, local, plastic, and durable; they say nothing about whether growing a model \emph{helps}. A pre-registered companion study on the induction task (commitment to report the outcome recorded before results were seen) found what an honest reading of the growth literature would predict: at equal \emph{step} count, the grafted model slightly trails training-from-scratch (mechanically --- it spends fewer steps at the final architecture); at equal \emph{compute} (the axis Net2Net and its successors actually claim, with the 3-vs-4-block per-step cost ratio measured at $0.76$ rather than assumed), a paired 20-seed comparison shows one statistically significant window: an early advantage for grafting at $16.7\%$ of the budget (95\% CI on the paired loss difference $[+0.16, +0.56]$, excluding zero), which fades to statistical indistinguishability from $33\%$ of the budget onward as both regimes saturate the task. We report this asymmetry as-is: the load-bearing claims of this paper are exactness and locality, not a training-efficiency advantage, and the task's early saturation limits what any utility comparison can resolve at this scale. Figure~\ref{fig:utility} shows the full paired-difference curve: the confidence interval clears zero only in the leftmost window.

\begin{figure}[htbp]
\centering
\includegraphics[width=.75\linewidth]{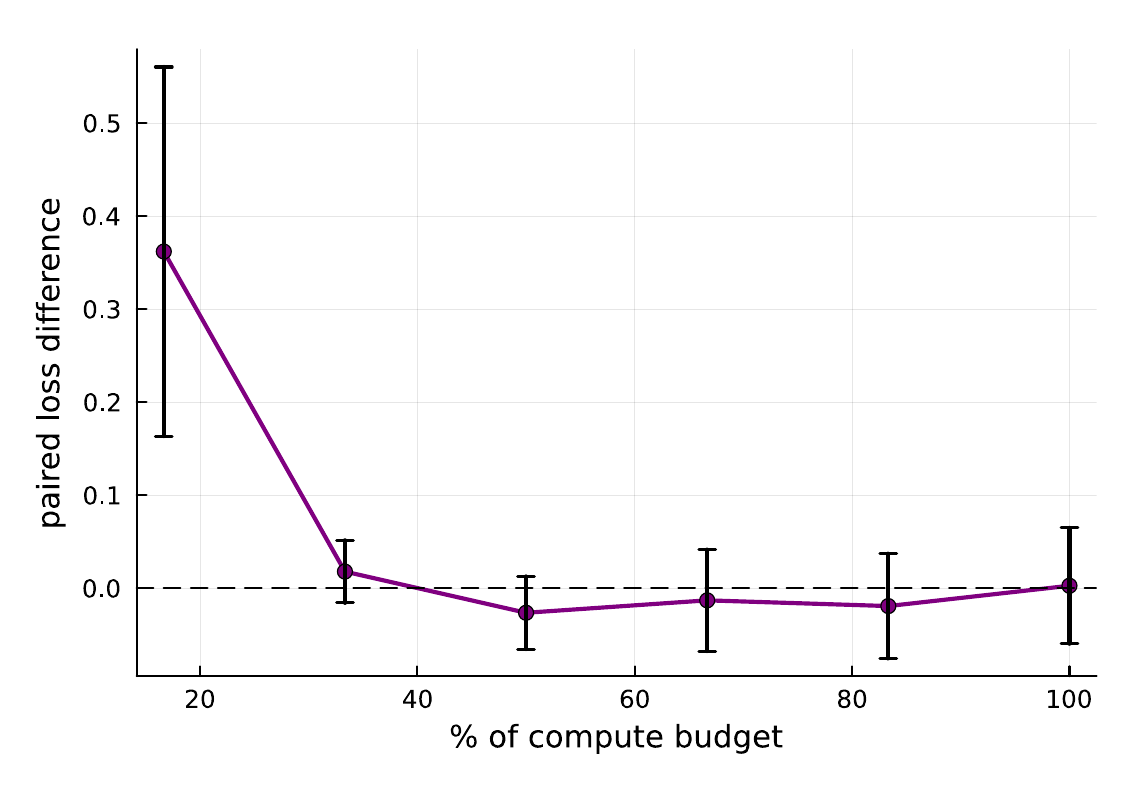}
\caption{Paired loss difference (loss from scratch $-$ loss grafted) at equal compute, mean $\pm$ 95\% CI over 20 seeds (normal approximation). Positive values favor grafting. The interval excludes zero only at $16.7\%$ of the budget.}
\label{fig:utility}
\end{figure}

\section{Limitations}
\label{sec:limitations}
Bit-exactness is guaranteed only under the stated IEEE-754 hypotheses and a fixed kernel and evaluation order; frameworks with non-deterministic reductions (e.g., atomics in some CUDA kernels) preserve functional but not bit-level equality, and our measurements are CPU\slash Float32 --- re-verifying E1 on GPU and in bf16 is mechanical but remains to be done. The almost-sure statement of Proposition~\ref{prop:escape} does not bound the \emph{magnitude} of $\partial\Loss/\partial\alpha$, and E2 makes this concrete: the escape occurs at step 1 as proved, but $|\alpha|$ stays near $10^{-3}$ over 600 steps on this task, so the grafted block's functional contribution ramps up slowly --- GradMax-style \cite{gradmax} initialization of $\theta$ could be combined with our gate to control this magnitude. That caveat concerns the gate alone; Table~\ref{tab:undo-factors} in the appendix measures the branch norm as well and finds it can move in the opposite direction, so a small $|\alpha|$ should not by itself be read as functional proximity to the identity. E2 uses a single controlled run per arm; its claims are about gradient signals (exact zeros and first-nonzero steps), not about loss distributions. The utility study is confined to a toy task that saturates early. The appendix's timing-dependence observation (Section~\ref{sec:e6}) is a single controlled comparison at one graft site on one task --- a falsifiable first data point, not a rate law --- and the exponential-relaxation ansatz of Definition~\ref{def:relaxation} is a fitted description, not a derived consequence of the paper's theorems; likewise E8 (Section~\ref{sec:e8}) isolates one mechanism on one seed, one site, and one task, and the reversed ordering it finds under warm-start is itself a single-run observation. Finally, our locality result concerns validity propagation; memory locality (buffer reuse across the surgery) is engine-specific and out of scope, and the measured $\sim$2.2~ms evaluator intercept in E3 bounds how far locality alone can reduce small-cone surgery cost without a compilation layer.

\section{Conclusion}
Network surgery becomes exact when the graft is an identity morphism, and remains plastic when exactness is carried by a single zero-initialized gate over a randomly initialized branch --- never by both mechanisms at once, on pain of an inescapable saddle point. A reactive graph engine makes the operation local: exactly the downstream cone is recomputed, and all other nodes keep their values and optimizer state. Every one of these statements now has a measurement attached to its proof: bit-exact grafting with verified preconditions (0/1600 mismatches), a two-step escape confirmed at the resolution of individual optimizer steps, a saddle point whose gradients stay at exact zero for 600 steps, locality with $r = 0.9992$ against cone size over constant $\sim$0.75~ms bookkeeping, and training that resumes bit-identically across a real process restart. Together, these results turn "growing a network mid-training" from a fragile engineering trick into an operation with a specification, a proof, and a passing test suite.

\appendix

\section{Appendix: Preliminary Observations on Post-Insertion Gate Dynamics}
\label{sec:appendix-gate}

\emph{Status of this appendix.} Everything in this appendix is exploratory: one seed, one graft site, one task, and a fitted (not derived) dynamical ansatz. These observations are reported at a deliberately lower evidentiary standard than the theorems and experiments of the main text --- as first, falsifiable data points on what happens to a grafted gate over the training that \emph{follows} insertion, not as established results.

\subsection{Functional Undo Distance}
\label{sec:functional-undo}

The following proposition quantifies the \emph{functional} counterpart of the structural undo of Section~\ref{sec:undo}: how far the grafted function has drifted from the identity map that a structural undo would restore.

\begin{proposition}[Functional undo distance]
\label{prop:undo-distance}
For the Gradient Shadowing construction $F_{\Hb}(x;\alpha) = x + \alpha R(x;\theta)$, the
functional distance between the currently grafted function and the identity map that a
\emph{structural} undo (removing the block, restoring the edge) would realize is, exactly and
without approximation,
\[
d(t) \;:=\; \mathbb{E}_x\big\|F_{\Hb}(x;\alpha_t) - x\big\| \;=\; |\alpha_t|\cdot \mathbb{E}_x\big\|R(x;\theta_t)\big\|.
\]
\end{proposition}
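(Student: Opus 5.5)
The identity is a pointwise algebraic fact followed by linearity of expectation, so the proof is short. I would write it out in three steps, noting the measure-theoretic hypotheses as they arise.

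\emph{Step 1 (pointwise identity).} Fix $t$ and an input $x$. By the Gradient Shadowing definition, $F_{\Hb}(x;\alpha_t) - x = \alpha_t R(x;\theta_t)$. Absolute homogeneity of any norm then gives $\|F_{\Hb}(x;\alpha_t) - x\| = |\alpha_t|\,\|R(x;\theta_t)\|$. This holds for every $x$ and every $t$. It needs no assumption on $\theta_t$ and no smallness of $\alpha_t$, which is why the statement is "exact, without approximation" rather than a first-order expansion.

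\emph{Step 2 (take the expectation).} The scalar $|\alpha_t|$ does not depend on $x$, because the gate is a single learned parameter shared across inputs. It therefore factors out of $\mathbb{E}_x$ by linearity: $\mathbb{E}_x\big[|\alpha_t|\,\|R(x;\theta_t)\|\big] = |\alpha_t|\,\mathbb{E}_x\|R(x;\theta_t)\|$. Both sides are expectations of nonnegative measurable functions of $x$, since $R$ is continuous in $x$, being a composition of smooth primitives. The equality therefore holds in $[0,\infty]$ even without integrability. If one wants $d(t)$ to be finite, the cleanest hypothesis is $\mathbb{E}_x\|R(x;\theta_t)\| < \infty$, which is immediate for an empirical distribution over a finite dataset. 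The edge case $\alpha_t = 0$ with infinite expectation must use the measure-theoretic convention $0\cdot\infty = 0$. That convention is consistent here, because the left-hand side is then $\mathbb{E}_x\|0\| = 0$.

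\emph{Step 3 (interpretation).} A structural undo (Remark~\ref{rem:structural-undo}) restores the edge $e$, so the value flowing along it becomes $x$ itself. The identity map is therefore the correct comparison function, and $d(t)$ measures drift at the graft site. At $t_{\mathrm{ins}}$ we have $\alpha = 0$, so $d = 0$, which recovers Theorem~\ref{thm:identity} in the gated form.

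There is no real obstacle in the argument. The only point needing care is stating the integrability convention in Step 2 precisely. It is also worth flagging that the factorization makes $d(t)$ depend on both $|\alpha_t|$ and the branch norm $\mathbb{E}_x\|R\|$. This is exactly why a small gate alone does not certify proximity to the identity, as noted in Section~\ref{sec:limitations}.
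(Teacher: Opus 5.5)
Your proof is correct and follows the paper's argument: the pointwise identity $F_{\Hb}(x;\alpha_t)-x=\alpha_t R(x;\theta_t)$, then absolute homogeneity of the norm, then pulling $|\alpha_t|$ out of $\mathbb{E}_x$ because the gate does not depend on $x$. The paper's one-line proof leaves that final factoring step and any integrability convention implicit, so your Step~2 only spells out what it takes for granted.
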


\begin{proof}
Immediate from the definition of $F_{\Hb}$: $F_{\Hb}(x;\alpha_t)-x = \alpha_t R(x;\theta_t)$, and
$\|\alpha_t R(x;\theta_t)\| = |\alpha_t|\cdot\|R(x;\theta_t)\|$ since $\alpha_t\in\mathbb{R}$.
\end{proof}

\begin{remark}[Irreversibility accumulates exactly at the rate the gate opens]
Proposition~\ref{prop:escape} guarantees $\alpha_t\neq0$ generically from $t=1$ onward, so
$d(t)>0$ from the same step: the graft is, in this precise functional sense, never perfectly
undoable again after its first update, however small $d(t)$ remains. This is not a shortcoming to
patch but the necessary price of Proposition~\ref{prop:escape}'s plasticity guarantee -- a graft
that could always be undone for free could not have moved away from the identity in the first
place. What is left to determine, and is not fixed by the theorem, is how large $d(t)$ becomes in
practice.
\end{remark}

The already-reported E2 trajectory (Table~\ref{tab:e2}, arm (a), $\S$\ref{sec:experiments}) bears
on this question, but only through a proxy: $|\alpha_t|$ is logged at every step, whereas the
second factor $\mathbb{E}_x\|R(x;\theta_t)\|$ of Proposition~\ref{prop:undo-distance} is not.
Read through that proxy, $|\alpha_t|$ rises from $0$ to order $10^{-2}$ by step $1$ (the escape),
oscillates at order $10^{-3}$ thereafter, and ends at $2.7\times10^{-4}$, which invites the
conclusion that $d(t)$ itself stayed small. The measurement reported next shows that this
inference does not survive contact with the missing factor, and we therefore do not draw it.

\paragraph{The gate alone does not measure the undo distance.}
A follow-up study logs \emph{both} factors of Proposition~\ref{prop:undo-distance} directly, on a
deliberately different setting: a character-level language model trained on real text, with a
two-head shadow block grafted by the same Gradient Shadowing construction ($\alpha_0=0$ over a
randomly initialized branch) and trained for $1000$ further steps, with the host network frozen so
that only the gate and the grafted branch move. The gate's own learning rate is held at $10^{-3}$
throughout while the learning rate on the branch parameters $\theta$ is swept across three orders
of magnitude, one run per rung. Table~\ref{tab:undo-factors} reports the two
factors at step $1000$. Branch magnitude is recorded as $\|R\|_{\mathrm{rms}} = \|R\|_F/\sqrt{n}$
over one fixed batch, computed identically for every row; it is proportional to
$\mathbb{E}_x\|R\|$ only up to a Jensen gap that closes when the per-example norms are equal, so
the final column is $d$ up to a factor not guaranteed identical across rows. What the argument
below rests on is the ordering, and effect sizes --- a $45\times$ growth, a $7.4\times$ spread ---
large enough that no plausible variation in that factor accounts for them.

\begin{table}[htbp]
\centering
\caption{Both factors of $d = |\alpha|\cdot\mathbb{E}_x\|R(x;\theta)\|$ at step $1000$, over a
learning-rate ladder on the grafted branch. All four runs start from the same branch
initialization ($\|R\|_{\mathrm{rms}} = 0.232$). The gate and the branch move in opposite
directions, and the resulting ordering by $d$ is the reverse of the ordering by $|\alpha|$.}
\label{tab:undo-factors}
\begin{tabular}{l c c c}
\toprule
Learning rate on $\theta$ & $|\alpha_{1000}|$ & $\|R\|$ growth & $d \propto |\alpha|\cdot\|R\|$ \\
\midrule
$10^{-3}$              & $0.108$ & $\times 45.1$ & $1.125$ \\
$10^{-4}$              & $0.383$ & $\times\phantom{0}4.8$ & $0.427$ \\
$10^{-5}$              & $0.644$ & $\times\phantom{0}1.4$ & $0.204$ \\
$2\times10^{-6}$ (+ weight decay $0.1$) & $0.626$ & $\times\phantom{0}1.0$ & $0.152$ \\
\bottomrule
\end{tabular}
\end{table}

Two observations follow. First, the unlogged factor is not slowly varying in general: with $\theta$
free, $\|R\|$ grows by a factor of $45$ over $1000$ steps, so it dominates $d$ rather than merely
rescaling it. Second, and more consequentially for how E2 should be read, the two factors are
\emph{anti}-correlated: freeing $\theta$ lets the branch grow while the gate closes to compensate,
and constraining $\theta$ pins $\|R\|$ near its initialization while the gate opens wide. The run
with the \emph{smallest} final gate --- the one a proxy reading would call the most nearly
undoable --- has the \emph{largest} undo distance, $7.4\times$ that of the run with the largest
gate. Ranking grafts by $|\alpha|$ alone therefore recovers the reverse of the intended order.

This does not overturn the E2 reading on its own run: that run is a different task at a different
learning rate over $600$ rather than $1000$ steps, and nothing measured here says its $d(t)$ was
large. What it removes is the ground for the general inference. Proposition~\ref{prop:undo-distance}
is an exact identity in two factors, and only measuring both settles how far a graft has drifted;
we accordingly log $\|R\|$ alongside $\alpha$ in all subsequent work. The usual caveats of this
appendix apply here in full --- one seed per rung, one graft site, one task.

\subsection{Absorption and Rejection Dynamics of a Grafted Gate}
\label{sec:absorption}

Propositions~\ref{prop:saddle} and \ref{prop:escape} settle whether the gate moves away from its
initial value; they say nothing about \emph{how}, over the training that follows. Unlike the rest
of this paper, that question has no closed form: $\alpha(t)$ for $t > t_{\mathrm{ins}}$ is the
output of a non-convex stochastic optimization, not a quantity a proof can pin down exactly.
What can be done honestly is to name the question precisely and test it empirically, in the same
spirit as the phenomenological (fitted, not derived) learning-curve model used in the companion
growth-schedule line of work.

\begin{definition}[Gate relaxation rate]
\label{def:relaxation}
Let $\alpha(t)$ be the trajectory of the gate for $t \geq t_{\mathrm{ins}}$ under either construction
of Section~\ref{sec:shadow}. Its \emph{relaxation rate} $\lambda$ is the coefficient of the
exponential ansatz
\[
\alpha(t) \;\approx\; \alpha_\infty + (\alpha_0 - \alpha_\infty)\, e^{-\lambda (t - t_{\mathrm{ins}})},
\]
fit by ordinary least squares on $\log|\alpha(t) - \alpha_\infty|$ (or, when $\alpha_\infty$ is not
independently known, on $\log|\alpha(t)|$ directly, as below). We call $\lambda > 0$
\emph{rejection} (the gate relaxes back toward a smaller magnitude than $\alpha_0$) and
$\lambda < 0$ \emph{absorption} (the gate grows in magnitude). The \emph{absorption time}
$\tau_{\mathrm{abs}}$ is $1/|\lambda|$, the trajectory's characteristic timescale.
\end{definition}

\begin{remark}[Not a theorem]
Definition~\ref{def:relaxation} names a fitting procedure, not a proven dynamical law: nothing in
this paper's proofs constrains $\alpha(t)$ beyond the sign of its first derivative at
$t_{\mathrm{ins}}$ (Proposition~\ref{prop:escape}). The exponential form is an ansatz, justified
only by its empirical fit quality, reported alongside every number derived from it below.
\end{remark}

This raises a natural training-time locality question --- the temporal analogue of the structural
locality of Theorem~\ref{thm:locality}: does $\lambda$ depend on \emph{when} the graft is inserted?
Two graphs identical in every respect except
$t_{\mathrm{ins}}$ isolate exactly this dependence, holding architecture, task, seed, and gate
construction fixed. We call this the \emph{timing-dependence hypothesis} and test it directly in
E6 below.

\begin{remark}[A candidate mechanism: optimizer-moment desynchronization]
\label{rem:desync-hypothesis}
A reactive graph makes a specific, checkable hypothesis available: the grafted branch's AdamW
second moment $v$ is initialized at zero at $t_{\mathrm{ins}}$ (the standard and correct choice per
the remark following Theorem~\ref{thm:locality}), while its sole downstream consumer's own $v$ is
whatever the host's training history has already accumulated. Since AdamW's effective step size
scales as $1/\sqrt{v+\epsilon}$, a near-zero $v$ gives the branch's freshly random parameters a
transiently oversized step relative to a downstream consumer whose $v$ is already large -- a
candidate mechanism for $\lambda>0$ that is falsifiable by a direct intervention: artificially
initializing the branch's $v$ to match its downstream consumer's, rather than to zero, and checking
whether $\lambda$ changes. E8 (Section~\ref{sec:e8}) performs exactly this intervention.
\end{remark}

\subsection{E6: Timing-Dependent Gate Dynamics of the Net2Net-Style Construction (Definition~\ref{def:relaxation})}
\label{sec:e6}

This experiment studies the \emph{Net2Net-style} construction of Remark~\ref{rem:net2net-alt} ---
arm (b) of Table~\ref{tab:e2}, $\alpha_0 = 1$ with zero output projections --- \emph{not} the
eponymous zero-initialized Gradient Shadowing gate of arm (a): the gate here starts at full
strength, and the question is whether the training that follows keeps it there. A small
character-level language model (TinyShakespeare, block size $64$, $d=64$, $2$ heads,
hidden $128$, $3$ Llama blocks) is trained from a fixed seed. A gated parallel block
(Remark~\ref{rem:parallel}) is grafted after the model's output at one of two insertion times,
$t_{\mathrm{ins}} \in \{500, 2500\}$ (``early'' / ``late''), with $\alpha_0 = 1$ and
$W_O = W_{\mathrm{down}} = 0$, so that the
branch starts at full gate strength but zero output, giving $\alpha(t)$ room to move in either
direction as the zero projections unlock. Both runs share architecture, task, corpus, and random
seed, and differ \emph{only} in $t_{\mathrm{ins}}$ -- the
controlled comparison Section~\ref{sec:absorption} asks for. Training continues for $1200$ steps
past the graft in both cases; every $100$ steps we log $\alpha(t)$ directly and measure
$A(t) = \Loss(\alpha{:=}0) - \Loss(\alpha_t)$ on a fixed held-out validation window via the same
cheap-ablation methodology as the companion patching line of work: patch $\alpha$ to $0$, evaluate,
restore $\alpha_t$, evaluate again -- no retraining, one forward pass per side.

\begin{table}[htbp]
\centering
\caption{E6 --- gate trajectory summary, early vs.\ late graft (single seed, single site).}
\label{tab:e6}
\begin{tabular}{l c c c c}
\toprule
\textbf{Arm} & $\alpha(t{=}100)$ & $\alpha(t{=}1200)$ & $\hat\lambda$ (fit, $R^2$) & $A(t)$ sign \\
\midrule
Early ($t_{\mathrm{ins}}{=}500$)  & $0.816$ & $0.524$ & $3.89\times10^{-4}$ ($0.986$) & crosses $-\!\to\!+$ near $t{\approx}650$ \\
Late ($t_{\mathrm{ins}}{=}2500$)  & $0.690$ & $0.369$ & $5.24\times10^{-4}$ ($0.961$) & negative throughout \\
\bottomrule
\end{tabular}
\end{table}

\begin{figure}[htbp]
\centering
\includegraphics[width=.75\linewidth]{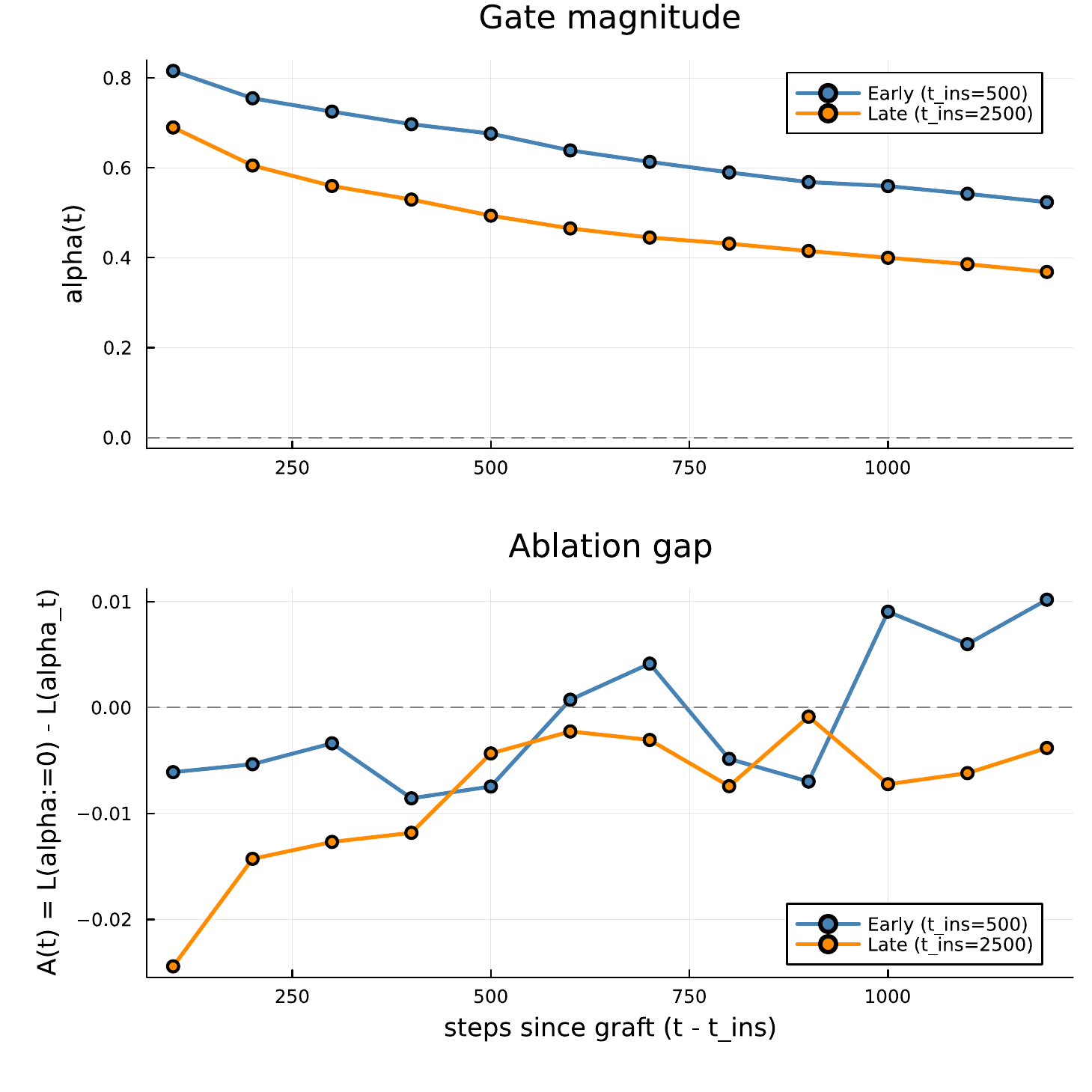}
\caption{E6 --- gate magnitude $\alpha(t)$ (top) and ablation gap $A(t)$ (bottom) for $1200$ steps
after an early ($t_{\mathrm{ins}}{=}500$) and a late ($t_{\mathrm{ins}}{=}2500$) graft. Both gates
decay monotonically (rejection, not absorption); the late graft decays faster and its $A(t)$ never
crosses zero, while the early graft's $A(t)$ turns positive from $t{\approx}650$ onward.}
\label{fig:e6}
\end{figure}

Both trajectories are well described by the exponential ansatz of Definition~\ref{def:relaxation}
($R^2 \geq 0.96$), and both give $\hat\lambda > 0$: neither graft is \emph{absorbed} in the sense
of the gate growing to accommodate the new capacity; both are \emph{rejected}, the optimizer
shrinking $\alpha$ back from its full-strength initialization as the randomly initialized branch
proves, on net, more disruptive than useful at this scale and duration -- the opposite of the naive
expectation that a freshly unlocked branch would be absorbed. The timing-dependence hypothesis is
confirmed, in this controlled single-seed comparison, in a specific direction: the late graft is
rejected $\approx 35\%$ faster ($\hat\lambda_{\mathrm{late}} / \hat\lambda_{\mathrm{early}} \approx 1.35$)
and its ablation gap $A(t)$ stays negative for the full $1200$-step window, while the early graft's
rejection decelerates and its contribution turns net-positive ($A(t) > 0$) from roughly $t \approx
650$ onward -- the branch that had more of the host network's own training left to interact with is
the one that is eventually kept, not the one that is discarded fastest.

We report this as a single controlled comparison, not a distributional claim: one seed, one graft
site, one task, exactly the caveat already attached to E2's arms (Table~\ref{tab:e2}), which this
experiment reuses without modification. The direction and the $\approx 35\%$ effect size should be
read as a first, falsifiable data point for the site/timing-dependence question, not as an
established rate law; a multi-seed, multi-site replication is the natural next step and is left to
future work. E8 (Section~\ref{sec:e8}) tests Remark~\ref{rem:desync-hypothesis}'s candidate
mechanism for this timing effect directly and finds it substantially, though not necessarily
completely, explains the ordering observed here -- read this section's $\approx 35\%$ figure
alongside that result, not in isolation.

\subsection{E8: Isolating the Mechanism -- an Optimizer-Moment Warm-Start Counterfactual}
\label{sec:e8}

Remark~\ref{rem:desync-hypothesis} proposes a mechanism for E6's rejection dynamic: the branch's
AdamW second moment $v$ starts at zero while its sole downstream consumer's ($\mathtt{lm\_head\_W}$,
the only parameterized node reading the graft's output) already carries a mature, non-zero $v$ from
the host's prior training. A first check of this mechanism, instrumenting E6's exact protocol to
log $v$ for the branch and for $\mathtt{lm\_head\_W}$ separately, confirmed real desynchronization
at $t_{\mathrm{ins}}$ (the branch's largest per-parameter $v$ is $100$--$200\times$ smaller than
$\mathtt{lm\_head\_W}$'s throughout the run) -- but the raw \emph{ratio} of this gap was, if
anything, \emph{smaller} for the late graft at matched steps-since-insertion, the opposite of what
would naively explain a \emph{faster} late-graft rejection; correlation alone was inconclusive; a
mean computed naively over all branch parameters was itself misleading here, reading as
numerically zero because $\sim\!90{,}000$ mostly-quiescent branch weights dilute the handful that
carry real signal -- resolved by reporting the per-parameter maximum instead, the quantity actually
governing AdamW's per-parameter step size.

The direct test: E6's protocol is rerun for both arms with one change -- immediately after the
graft, every branch parameter's $v$ is set not to zero but to $\mathrm{mean}(v_{\mathtt{lm\_head\_W}})$
at that instant (the host's own current moment estimate; $m_1$ is left at zero, the standard
choice, since only $v$ enters the effective-step-size mechanism of
Remark~\ref{rem:desync-hypothesis}).

\begin{table}[htbp]
\centering
\caption{E8 --- gate relaxation rate $\hat\lambda$ (Definition~\ref{def:relaxation}), cold- vs.\ warm-start second moment.}
\label{tab:e8}
\begin{tabular}{l c c c}
\toprule
\textbf{Arm} & $\hat\lambda$, cold-start (E6) & $\hat\lambda$, warm-start & Slowdown factor \\
\midrule
Early ($t_{\mathrm{ins}}{=}500$)  & $3.89\times10^{-4}$ & $5.62\times10^{-5}$ & $6.9\times$ \\
Late ($t_{\mathrm{ins}}{=}2500$)  & $5.24\times10^{-4}$ & $4.50\times10^{-5}$ & $11.6\times$ \\
\bottomrule
\end{tabular}
\end{table}

\begin{figure}[htbp]
\centering
\includegraphics[width=.75\linewidth]{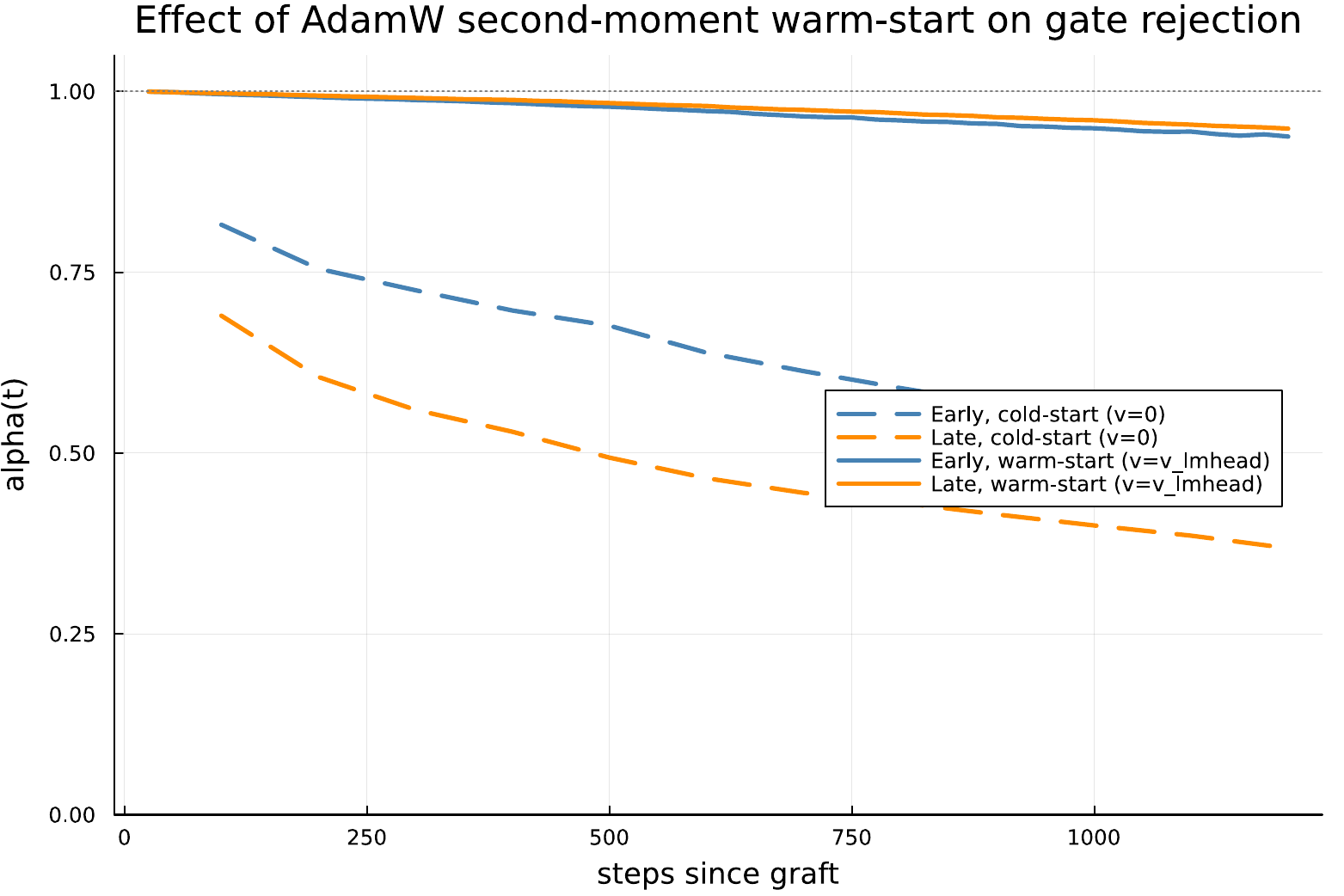}
\caption{E8 --- gate magnitude $\alpha(t)$, cold-start (E6, dashed) vs.\ warm-start (solid), both
arms. Warm-starting the branch's second moment to match its downstream consumer's collapses the
rejection rate by nearly an order of magnitude in both arms and reverses which arm rejects faster.}
\label{fig:e8}
\end{figure}

Two findings, both direct consequences of the intervention rather than of anything asserted in
advance. First, warm-starting slows rejection sharply in \emph{both} arms ($6.9\times$ and
$11.6\times$): the candidate mechanism is not merely present but load-bearing -- most of E6's
rejection magnitude is attributable to the cold-start effective-step-size mismatch, not to some
other property of a freshly random branch. Second, and unanticipated: \emph{the early/late ordering
reverses}. Under warm-start, $\hat\lambda_{\mathrm{late}} < \hat\lambda_{\mathrm{early}}$
($4.50\times10^{-5}$ vs.\ $5.62\times10^{-5}$, late $20\%$ \emph{slower}, not faster), whereas
cold-start gave the opposite ordering by $35\%$ (Section~\ref{sec:e6}). The most parsimonious
reading: $\mathtt{lm\_head\_W}$'s $v$ is itself larger at $t_{\mathrm{ins}}=2500$ than at
$t_{\mathrm{ins}}=500$ (more prior training), so the cold-start mismatch -- and hence the spurious
component of the timing effect -- is mechanically \emph{larger} for the late graft; once that
mismatch is equalized by warm-starting, a smaller, opposite-signed timing effect is revealed
underneath it. We do not claim to have isolated every contributor to this residual effect, only
that the dominant one in E6's cold-start reading was the optimizer-moment artifact, not a deeper
property of \emph{when} in training a graft occurs.

\begin{remark}[What this does and does not settle]
E8 is a single counterfactual intervention on one mechanism, on the same single-seed, single-site,
single-task setup as E6 -- it does not establish that optimizer-moment desynchronization is the
\emph{only} driver of gate rejection, nor that the reversed warm-start ordering is itself stable
across seeds. What it does establish, falsifiably and at this scale, is that a specific, named,
checkable mechanism (Remark~\ref{rem:desync-hypothesis}) accounts for enough of E6's effect size to
change its sign under a direct intervention -- exactly the kind of question a persistent, reactive
graph makes cheap to ask, since $v$ is already resident per-parameter state rather than something a
tape-based framework would need new machinery to intercept and rewrite mid-training.
\end{remark}


\begin{thebibliography}{14}
\bibitem{net2net} Chen, T., Goodfellow, I., Shlens, J. (2016). Net2Net: Accelerating Learning via Knowledge Transfer. \emph{ICLR}.
\bibitem{rezero} Bachlechner, T., Majumder, B.P., Mao, H., Cottrell, G., McAuley, J. (2021). ReZero is All You Need: Fast Convergence at Large Depth. \emph{UAI}.
\bibitem{fixup} Zhang, H., Dauphin, Y.N., Ma, T. (2019). Fixup Initialization: Residual Learning Without Normalization. \emph{ICLR}.
\bibitem{layerscale} Touvron, H., Cord, M., Sablayrolles, A., Synnaeve, G., J\'egou, H. (2021). Going Deeper with Image Transformers. \emph{ICCV}.
\bibitem{stacking} Gong, L., He, D., Li, Z., Qin, T., Wang, L., Liu, T.-Y. (2019). Efficient Training of BERT by Progressively Stacking. \emph{ICML}.
\bibitem{bert2bert} Chen, C., Yin, Y., Shang, L., Jiang, X., Qin, Y., Wang, F., Wang, Z., Chen, X., Liu, Z., Liu, Q. (2022). bert2BERT: Towards Reusable Pretrained Language Models. \emph{ACL}.
\bibitem{lemon} Wang, Y., et al. (2024). LEMON: Lossless Model Expansion. \emph{ICLR}.
\bibitem{gradmax} Evci, U., van Merri\"enboer, B., Unterthiner, T., Vladymyrov, M., Pedregosa, F. (2022). GradMax: Growing Neural Networks using Gradient Information. \emph{ICLR}.
\bibitem{frozensubstrate} Bochkov, A. (2025). Growing Transformers: Modular Composition and Layer-wise Expansion on a Frozen Substrate. \emph{arXiv:2507.07129}.
\bibitem{llama} Touvron, H., et al. (2023). LLaMA: Open and Efficient Foundation Language Models. \emph{arXiv:2302.13971}.
\bibitem{rmsnorm} Zhang, B., Sennrich, R. (2019). Root Mean Square Layer Normalization. \emph{NeurIPS}.
\bibitem{pytorch} Paszke, A., et al. (2019). PyTorch: An Imperative Style, High-Performance Deep Learning Library. \emph{NeurIPS}.
\bibitem{jax} Bradbury, J., et al. (2018). JAX: Composable Transformations of Python+NumPy Programs. \url{http://github.com/google/jax}.
\bibitem{neurodsl} Khemais, A. (2026). NeuroDSL: A Reactive Computational Graph Framework for Deep Learning in Julia. \emph{Preprint}. Reference implementation: \url{https://github.com/nevermind78/NeuroDSL}.
\end{thebibliography}
\end{document}